\newtheorem{theorem}{Theorem} 
\newtheorem{remark}{Remark} 
\newtheorem{lemma}{Lemma} 
\newcommand{\ie}{{\em i.e.,~}}
\newcommand{\eg}{{\em e.g.,~}}
\def\BibTeX{{\rm B\kern-.05em{\sc i\kern-.025em b}\kern-.08em
    T\kern-.1667em\lower.7ex\hbox{E}\kern-.125emX}}
\begin{document}

\title{Asynchronous Gossip Algorithms \\for Rank-Based Statistical Methods}

\author{\IEEEauthorblockN{Anna van Elst}
\IEEEauthorblockA{\textit{LTCI}, \textit{T\'el\'ecom Paris}
\\
Institut Polytechnique de Paris
\\
\texttt{\footnotesize anna.vanelst@telecom-paris.fr}}
\and
\IEEEauthorblockN{Igor Colin}
\IEEEauthorblockA{\textit{LTCI}, \textit{T\'el\'ecom Paris}
\\
Institut Polytechnique de Paris
\\
\texttt{\footnotesize igor.colin@telecom-paris.fr}}
\and
\IEEEauthorblockN{Stephan Cl\'emen\c{c}on}
\IEEEauthorblockA{\textit{LTCI}, \textit{T\'el\'ecom Paris}
\\
Institut Polytechnique de Paris
\\
\texttt{\footnotesize stephan.clemencon@telecom-paris.fr}}
}
% \author{\IEEEauthorblockN{Anonymous Authors}}

\maketitle

\begin{abstract}
As decentralized AI and edge intelligence become increasingly prevalent, ensuring robustness and trustworthiness in such distributed settings has become a critical issue---especially in the presence of corrupted or adversarial data. Traditional decentralized algorithms are vulnerable to data contamination as they typically rely on simple statistics (\textit{e.g.}, means or sum), motivating the need for more robust statistics. In line with recent work on decentralized estimation of trimmed means and ranks, we develop gossip algorithms for computing a broad class of rank-based statistics, including L-statistics and rank statistics---both known for their robustness to outliers. We apply our method to perform robust distributed two-sample hypothesis testing, introducing the first gossip algorithm for Wilcoxon rank-sum tests. We provide rigorous convergence guarantees, including the first convergence rate bound for asynchronous gossip-based rank estimation. We empirically validate our theoretical results through experiments on diverse network topologies. 
\end{abstract}

\begin{IEEEkeywords}
Gossip Algorithms, Distributed Hypothesis Testing, Ranking, Robustness, Rate Bound Analysis
\end{IEEEkeywords}

\pagestyle{plain} % to remove
\section{Introduction}

\setlength{\jot}{1pt}
The rise of the Internet of Things (IoT) has led to a surge in connected devices, generating vast amounts of data at the network's edge. At the same time, AI systems are becoming more complex, pushing the need for decentralized approaches. Edge intelligence—processing data close to where it is generated—offers a promising alternative to traditional, centralized AI. In this context, gossip algorithms have emerged as a lightweight, scalable way for devices to collaborate. These algorithms operate by having each node communicate only with its neighbors, making them ideal for environments like peer-to-peer systems or sensor networks, where data is naturally distributed and communication is limited. Gossip methods have been used for tasks such as computing averages, maximum, and sum \citep{boyd2006randomized, shah2009gossip, jelasity2005gossip}, and they also play a role in decentralized learning and optimization \citep{duchi2011dual, colin2016gossip}. However, their reliance on local communications makes them vulnerable to corrupted nodes---whether due to hardware failures or malicious attacks. To make gossip algorithms more robust, it is therefore necessary to go beyond simple statistics estimation.

In our recent work \cite{van2025robust}, we introduced \textsc{GoTrim}, a gossip algorithm for estimating robust means (\textit{i.e., }trimmed means), which leverages our \textsc{GoRank} algorithm for rank estimation. In this paper, we build on that foundation by developing gossip algorithms capable of computing a broad class of rank-based statistics. These include $L$-statistics (linear combinations of order statistics) and rank statistics (weighted sums of transformed ranks) \cite{van2000asymptotic}. This enables us to develop the first gossip algorithm for robust two-sample hypothesis testing. While there is a large body of literature on distributed Bayesian hypothesis testing using belief propagation \cite{alanyali2004distributed, olfati2006belief, soltanmohammadi2012decentralized}, to the best of our knowledge, no gossip-based algorithm exists for performing rank tests, such as the Wilcoxon rank-sum test \citep{HAJEK1999}. 

We also observe that \textsc{GoTrim} converges more slowly to the true trimmed mean when the trimming parameter $\alpha$ is large (\textit{i.e., }close to $0.5$). To address this, we introduce an enhanced version, \textit{Adaptive} \textsc{GoTrim} which provides more accurate estimates when the trimmed mean approaches the median. Additionally, motivated by our prior work showing that asynchronous communication (\textit{i.e.,} no global clock synchronization) is not only more practical in real-world situations but also yields slightly better performance, all our proposed gossip algorithms are designed for the asynchronous setting.

Another key contribution of this work is the convergence analysis of our gossip algorithms. We begin by analyzing \textsc{GoRank} in the asynchronous setting---a challenging setting that was omitted in our prior work. We then turn to the convergence analysis of our new gossip algorithms, focusing on the synchronous setting for clarity and simplicity. However, similar results can be extended to the asynchronous setting. 

Our main contributions are summarized as follows:

% \begin{itemize}
\noindent $\bullet$ We establish the first theoretical convergence rate for asynchronous gossip-based ranking on arbitrary communication graphs: we prove an \( \mathcal{O}(1/\sqrt{t}) \) convergence rate for the expected absolute error (where \( t\geq 1 \) denotes the number of iterations), matching the rate in the synchronous setting. \\
\noindent $\bullet$ We propose a novel gossip algorithm for estimating a wide class of rank-based statistics, enabling Wilcoxon rank-sum tests. To the best of our knowledge, this is the first robust gossip algorithm for two-sample distributed hypothesis testing. We prove a convergence rate of order \( \mathcal{O}(1/t) \) for Wilcoxon statistic estimation in the synchronous setting. \\
\noindent $\bullet$ We introduce \textit{Adaptive} \textsc{GoTrim}, designed to accelerate convergence when the trimmed mean is close to the median. We first establish an \( \mathcal{O}({1/\sqrt{t}}) \) bound on the expected absolute error for the original \textsc{GoTrim} in the synchronous setting, and obtain the same rate for \textit{Adaptive} \textsc{GoTrim}. \\
\noindent $\bullet$ Finally, we conduct experiments on various contaminated data distributions and network topologies.
% \end{itemize}

This paper is organized as follows. Section 2 introduces the notation and problem formulation. In Section 3, we analyze the convergence of \textit{Asynchronous} \textsc{GoRank}. Section 4 presents a gossip algorithm for estimating rank-based statistics, along with its application to rank tests and a convergence analysis for the Wilcoxon statistic. Section 5 introduces \textit{Adaptive} \textsc{GoTrim} and provides its its theoretical convergence guarantees. Finally, Section 6 reports numerical experiments. 

% In particular, we analyze the estimation of the Wilcoxon statistic and establish convergence guarantees for \textit{Adaptive} \textsc{GoTrim}.
\section{Preliminaries}

This section introduces the necessary notations, describes the setup, and formulates the problem of estimating rank-based statistics (\textit{e.g.}, rank statistics and $L$-statistics).

\paragraph{Notation} Let $n\geq 1$. We denote scalars by normal lowercase letters \(x \in \mathbb{R}\), vectors (identified as column vectors) by boldface lowercase letters \(\mathbf{x} \in \mathbb{R}^n\), and matrices by boldface uppercase letters \(\mathbf{X} \in \mathbb{R}^{n \times n}\). The set \(\{ 1, \dots, n \}\) is denoted by \([n]\), $\mathbb{R}_n$'s canonical basis by $\{\mathbf{e}_k:\; k\in [n]\}$, the indicator function of any event \(\mathcal{A}\) by \(\mathbb{I}_{\mathcal{A}}\), the transpose of any matrix \(\mathbf{M}\) by \(\mathbf{M}^{\top}\) and the cardinality of any finite set \(F\) by \(|F|\). By \(\mathbf{I}_n\) is meant the identity matrix in \(\mathbb{R}^{n \times n}\), by \(\mathbf{1}_n = (1,\; \ldots,\; 1)^{\top}\) the vector in \(\mathbb{R}^n\) whose coordinates are all equal to one, by \(\|\cdot\|\) the usual \(\ell_2\) norm, by \(\|\cdot\|_{\operatorname{op}}\) the operator norm, and by \(\mathbf{A}\odot \mathbf{B}\) the Hadamard product of matrices \(\mathbf{A}\) and \(\mathbf{B}\). We model a network of size \(n > 0\) as an undirected graph \(\mathcal{G} = (V, E)\), where \(V = [n]\) denotes the set of vertices and \(E \subseteq V \times V\) the set of edges. We denote by \(\mathbf{A}\) its adjacency matrix, meaning that for all \((i, j) \in V^2\), \([\mathbf{A}]_{ij} = 1\) iff \((i, j) \in E\), and by \(\mathbf{D}\) the diagonal matrix of vertex degrees. The graph Laplacian of \(\mathcal{G}\) is defined as \(\mathbf{L} = \mathbf{D} - \mathbf{A}\).

\paragraph{Setup} We study a decentralized setting involving $n \geq 2$ real-valued observations $X_1, \ldots, X_n$, each located at a distinct node of a communication network modeled by a \textit{connected} and \textit{non-bipartite} graph $\mathcal{G} = (V, E)$; see \cite{Diestelbook}. Specifically, node $k \in [n]$ holds observation $X_k$. To simplify the analysis, we assume there are no ties: $X_k \neq X_l$ for any $k \neq l$. Communication is pairwise and occurs randomly: at each step, an edge $e \in E$ is selected with probability $p_e$, and the two connected nodes exchange information. We operate in an \textit{asynchronous gossip} framework, where each node's local clock follows an independent Poisson process with rate 1. This setup is equivalent to a global Poisson clock of rate $n$, with a random edge activated at each tick, similar to the synchronous case (see \cite{boyd2006randomized} for clock modeling details). Under this framework, the probability that node $k$'s local clock ticks is $1/n$. Furthermore, each edge $e$ is activated at a tick with probability $p_e$. For simplicity, we assume a uniform selection among neighbors (see \cref{fig:async}), yielding
$$
p_e = \frac{1}{n} \left( \frac{1}{d_i} + \frac{1}{d_j} \right),  \, \text{with} \,  e = (i,j) \in E,
$$
where $d_i$ and $d_j$ denote the degrees of nodes $i$ and $j$, respectively. We also consider the presence of outliers: under Huber’s contamination model \citep{huber1992robust}, a fraction $0 < \varepsilon < 1/2$ of the observations may be corrupted.
\begin{figure}
    \centering
    \includegraphics[width=0.7\linewidth]{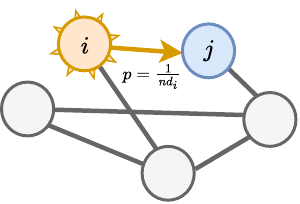}
    \caption{Visualization of the Asynchronous Gossip Framework. At each tick, node $i$ wakes up with probability $1/n$ and selects a neighbor uniformly at random (with probability $1/d_i$).}
    \label{fig:async}
\end{figure}

\paragraph{Decentralized Estimation of Rank-based Statistics} Our objective is to design a gossip-based algorithm that accurately estimates rank-based statistics. Denote $r_k$ is the rank of observation $X_k$, computed as $r_k = 1 + \sum_{l=1}^n \mathbb{I}_{\{X_k > X_l\}}$ in the absence of ties. The problem can be formulated as the following general expression
\begin{equation}
\label{eq:rank-stat}
T_n = \sum_{k=1}^n f(r_k)\, g(X_k),
\end{equation}
where $f$ is a function of the ranks $r_k$, and $g$ is a function of the observations $X_k$. By choosing $g$ as indicator functions of the observations and $f$ as appropriate transformations of the ranks, we recover rank statistics. A notable special case is the Wilcoxon statistic, used in the Wilcoxon rank sum test---a well-known robust nonparametric test for comparing two samples. Assume the dataset $S = \{X_1, \ldots, X_N\}$ is partitioned into two disjoint subsets, $S_1$ and $S_2$, \ie $S = S_1 \cup S_2$, with $|S_1| = n_1$ and $|S_2| = n_2$. The Wilcoxon statistic is given by
\begin{equation}
\label{eq:wilcoxon}
t_n = \sum_{k=1}^n r_k b_k,
\end{equation}
where $b_k = \mathbb{I}_{\{X_k \in S_1\}}$. This corresponds to a linear rank statistic with no transformation applied to the ranks. More general rank statistics apply a transformation to the ranks, \ie using $f(r_k)$ for some function $f$. For instance, the van der Waerden statistic employs $f = \Phi^{-1}$, the quantile function of the standard normal distribution.

For $g = \operatorname{id}$ (the identity function) and $f$ a function of the ranks, we obtain $L$-statistics. Typically, $f$ is an indicator function of the ranks (possibly scaled by a constant). For example, denoting \( m = \lfloor \alpha n \rfloor \), the trimmed mean is an $L$-statistic with function $f$ given by $f(r_k) = \mathbb{I}_{\{r_k \in I_{n, \alpha}\}}/(n - 2m) $ and inclusion interval \( I_{n, \alpha} = [m+1, n-m] \), see \cite{van2025robust}.

\section{Asynchronous Distributed Ranking}

In this section, we present the convergence analysis of \textit{Asynchronous} \textsc{GoRank}, as introduced in prior work \cite{van2025robust}. We show that expected absolute error decreases at a rate of \( \mathcal{O}(1/\smash{\sqrt{ct}}) \), where the constant $c > 0$ (defined in Theorem \ref{thm:async-gorank}) reflects the connectivity of the graph $\mathcal{G}$. A more connected graph yields a larger value of $c$, resulting in a tighter (\ie smaller) upper bound on the convergence rate.

\textit{Asynchronous} \textsc{GoRank}, described in \cref{alg:async-gorank}, operates similarly to its \textit{synchronous} counterpart by exploiting the fact that ranks can be estimated via pairwise comparisons. Specifically, for $k = 1, \ldots, n$,
\begin{equation}\label{eq:ranks}
r_k = 1 + n\left( \frac{1}{n}\sum_{l=1}^n \mathbb{I}_{\{X_k > X_l\}}\right)=1+nr'_k.
\end{equation}
The algorithm proceeds as follows: each node $k \in [n]$ maintains local estimates $R_k(t)$ and $R'_k(t)$ of the rank $r_k$ and its normalized form $r'_k$, respectively, at iteration $t \geq 1$. Additionally, each node keeps a local iteration counter $C_k(t)$ and an auxiliary observation $Y_k(t)$. When nodes $i$ and $j$ are selected, they exchange their auxiliary observations. Then,  each node $k \in \{i,j\}$ increments its counter $C_k(t)$ and updates its estimate $R'_k(t)$ by computing the running average of the previous estimate $R'_k(t-1)$ and the indicator function $\mathbb{I}_{\{X_k > Y_k(t-1)\}}$. Finally, $R_k(t)$ is updated using \cref{eq:ranks}.

\begin{algorithm}[htbp]
        \caption{Asynchronous GoRank}
        \label{alg:async-gorank}
        \begin{algorithmic}[1]
        \STATE \textbf{Init:} For each \(k\in[n]\), \(Y_k \gets X_k\), \(R'_k \gets 0\), $C_k \gets 1$.  
        \FOR{\(t=0, 1, \ldots\)}
        \STATE Draw \(e=(i, j) \in E\) with probability $p_e>0$.
        \FOR{\(k\in \{i, j\}\)}
        \STATE Set $R'_k \leftarrow\left(1-1/C_k\right) R'_k +(1/C_k)\mathbb{I}_{\{X_k > Y_k\}}$.
        \STATE Update rank estimate: \(R_k \leftarrow n R'_k + 1\).
        \STATE Set $C_k \gets C_k + 1$.
        \ENDFOR
        \STATE Swap auxiliary observation: \(Y_i \leftrightarrow Y_j\). 
        \ENDFOR
        \STATE \textbf{Output:} Estimate of ranks \(R_k\). 
        \end{algorithmic}
\end{algorithm}

This random swapping procedure induces a random walk for each observation on the graph, characterized by the permutation matrix $\mathbf{W}_1(t) = \mathbf{I}_n - (\mathbf{e}_i - \mathbf{e}_j)(\mathbf{e}_i - \mathbf{e}_j)^\top$. Taking the expectation over the edge sampling process yields $\mathbf{W}_1 := \mathbb{E}[\mathbf{W}_1(t)] = \mathbf{I}_n - (1/|E|) \mathbf{L}$. For additional details, see \cite{van2025robust}. For each $k \in [n]$, define $\mathbf{h}_k = (\mathbb{I}_{\{X_k > X_1\}}, \ldots, \mathbb{I}_{\{X_k > X_n\}})^\top$; the true rank of observation $X_k$ is then given by $r_k = \mathbf{h}_k^\top \mathbf{1}_n + 1$. Let $p_k$ denote the probability that node $k$ is selected at each iteration. Under uniform node sampling, note that $p_k \geq 1/n$.  At iteration $t=0$, the observations have not been swapped yet, we obtain $R_k'(0) = 0$ and $R_k(0)=1$. For iteration $t\geq1$, the rank estimates can be expressed as
$$
R_k(t) = 1 + n \cdot \left( \frac{1}{C_k(t)} \sum_{s=1}^t \delta_k(s)\, \mathbf{h}_k^\top \mathbf{W}_1(s{-}1{:})^\top \mathbf{e}_k \right),
$$
where, for $s \geq 1$, the variables $\delta_k(s) \sim \mathcal{B}(p_k)$ are i.i.d. Bernoulli random variables with parameter $p_k$, and $\mathbf{W}_1(s{:}) = \mathbf{W}_1(s) \cdots \mathbf{W}_1(0)$. The normalization factor is given by $C_k(t) = \sum_{s=1}^t \delta_k(s)$. 

\begin{remark}
Real-world scenarios often contain tied values. When the dataset has many ties, \textsc{GoRank} (which assumes no ties) exhibits a clear bias. The algorithm can be extended to handle the case of $\ell \geq 2$ tied observations using the mid-rank method: the rank assigned to the $\ell$ tied values is the average of the ranks they would have received in the absence of ties, that is, the average of $p+1, p+2, \ldots, p+\ell$, which equals $p + {(\ell + 1)/2}$; see \cite{kendall1945treatment}. In the presence of ties, the rank formula can be rewritten as 
\[
r_k = \frac{1}{2} + n r'_k, \quad
r'_k = \frac{1}{n} \sum_{l=1}^n \left(\mathbb{I}_{\{X_k > X_l\}} + \frac{1}{2} \mathbb{I}_{\{X_k = X_l\}}\right).
\]
This is again an average of indicator functions, and the algorithm update can be modified by replacing $\mathbb{I}_{\{X_k > Y_k\}}$ with $\mathbb{I}_{\{X_k > Y_k\}} + \frac{1}{2} \mathbb{I}_{\{X_k = Y_k\}}$. Note that in our algorithm, the case $\mathbb{I}_{\{X_k = X_k\}}$ occurs naturally since we use auxiliary variables. 
\end{remark}

We now present a convergence result for the \textit{Asynchronous} \textsc{GoRank} algorithm, which establishes that the expected absolute error decreases at a rate of \( \mathcal{O}(1/\smash{\sqrt{ct}}) \). 

\begin{theorem}[Convergence of Asynchronous \textsc{GoRank}]
\label{thm:async-gorank}
Let $R_k(t)$ denote the estimate in \cref{alg:async-gorank}. Then, we have, for any $t>0$:
$$
\mathbb{E} \left[ |  R_k(t) - r_k|\right]  \leq \mathcal{O}\left(\frac{1}{\sqrt{ct}}\right) \cdot \sigma_n(r_k), 
$$
where \( c\) represents the connectivity of the graph and corresponds to the spectral gap (or second smallest eigenvalue) of the Laplacian defined as $\mathbf L(P) =\sum_{e \in E} p_e\mathbf{L}_e$, and the rank functional \( \sigma_n(r_k)=n^{3/2}\cdot\phi((r_k-1)/n) \) is determined by the score generating function $\phi:u\in (0,1)\to \sqrt{u(1-u)}$.
\end{theorem}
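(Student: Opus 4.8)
The plan is to control the second moment $\mathbb{E}[(R_k(t)-r_k)^2]$ and then pass to the first moment via Jensen's inequality, $\mathbb{E}[|R_k(t)-r_k|]\le\sqrt{\mathbb{E}[(R_k(t)-r_k)^2]}$. Writing $r'_k=(r_k-1)/n=\mathbf{h}_k^\top\mathbf{1}_n/n$ and using the closed form for $R_k(t)$, the centered error becomes
\[
R_k(t)-r_k=\frac{n}{C_k(t)}\sum_{s=1}^t\delta_k(s)\,\xi_k(s),\qquad \xi_k(s):=\mathbf{h}_k^\top\mathbf{W}_1(s{-}1{:})^\top\mathbf{e}_k-r'_k.
\]
The three obstacles are the random denominator $C_k(t)$, the activation variables $\delta_k(s)$, and the temporal correlations in $\xi_k(s)$ induced by the shared swap history.

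First I would dispose of the random normalizer. Since each $R'_k(t)$ is an average of indicators, $|R_k(t)-r_k|\le n$ deterministically, and $C_k(t)=\sum_{s\le t}\delta_k(s)$ is a sum of i.i.d.\ Bernoulli$(p_k)$ variables with $p_k\ge 1/n$. A Chernoff bound gives $\mathbb{P}(C_k(t)<p_k t/2)\le e^{-p_k t/8}$, so on the complementary event I may use $C_k(t)\ge p_k t/2$, while the bad event contributes only an exponentially small term absorbed into the $\mathcal{O}(1/\sqrt{ct})$ rate. This reduces the task to bounding $\mathbb{E}[M_k(t)^2]$ with $M_k(t)=\sum_{s=1}^t\delta_k(s)\xi_k(s)$.

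The core estimate is the expansion $\mathbb{E}[M_k(t)^2]=\sum_{s}\mathbb{E}[\delta_k(s)\xi_k(s)^2]+2\sum_{s<s'}\mathbb{E}[\delta_k(s)\delta_k(s')\xi_k(s)\xi_k(s')]$, for which I rely on two structural facts. (i) Each $\mathbf{W}_1(s{-}1{:})$ is a product of transposition (hence doubly stochastic) matrices, so $\mathbf{W}_1(s{:})^\top\mathbf{1}_n=\mathbf{1}_n$; consequently the stationary mean of $\mathbf{h}_k^\top\mathbf{W}_1(\cdot)^\top\mathbf{e}_k$ is exactly $r'_k$ and $\mathbb{E}[\xi_k(s)^2]=r'_k(1-r'_k)+O((1-c)^s)=\phi(r'_k)^2+O((1-c)^s)$. (ii) The activation $\delta_k(s)$ depends only on the edge drawn at step $s$, hence is independent of $\mathbf{W}_1(s{-}1{:})$; the diagonal terms then equal $p_k\,\mathbb{E}[\xi_k(s)^2]$, summing to $\approx p_k t\,\phi(r'_k)^2$. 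For the off-diagonal terms with $s<s'$ I would condition on the filtration $\mathcal{F}_s$ generated by the first $s$ edges: $\delta_k(s')$ factors out as $p_k$, and $\mathbb{E}[\xi_k(s')\mid\mathcal{F}_s]=\mathbf{h}_k^\top\mathbf{W}_1(s{:})^\top(\mathbf{W}_1^{\,s'-1-s}-\tfrac1n\mathbf{1}_n\mathbf{1}_n^\top)\mathbf{e}_k$, whose magnitude is at most $(1-c)^{s'-1-s}\|\mathbf{h}_k\|$ by the geometric contraction $\|\mathbf{W}_1^{m}-\tfrac1n\mathbf{1}_n\mathbf{1}_n^\top\|_{\mathrm{op}}\le(1-c)^m$ guaranteed by the spectral gap $c$ of $\mathbf{L}(P)$ (non-bipartiteness ruling out the negative end of the spectrum). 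Summing the resulting geometric series over the gap $s'-s$ contributes a factor $1/c$.

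Combining the pieces gives $\mathbb{E}[M_k(t)^2]=\mathcal{O}(p_k^2 t\,\phi(r'_k)^2/c)$ beyond the diagonal contribution $\mathcal{O}(p_k t\,\phi(r'_k)^2)$; dividing by $(p_k t)^2$ and using $p_k\ge 1/n$ turns this into $\mathbb{E}[(R'_k(t)-r'_k)^2]=\mathcal{O}(n\,\phi(r'_k)^2/(ct))$, and multiplying by $n^2$ and taking the square root produces exactly $\mathcal{O}(1/\sqrt{ct})\cdot n^{3/2}\phi(r'_k)=\mathcal{O}(1/\sqrt{ct})\cdot\sigma_n(r_k)$. The extra factor $\sqrt n$ relative to the synchronous rate is precisely the price of asynchrony: node $k$ is active only a $p_k\approx 1/n$ fraction of the time, so it accumulates only about $t/n$ effective samples. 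I expect the main obstacle to be the off-diagonal correlation terms, where the activation indicator $\delta_k(s)$ is entangled with the future deviations $\xi_k(s')$ through the shared edge at step $s$; the nested conditioning on $\mathcal{F}_s$ together with the reversibility and double stochasticity of $\mathbf{W}_1$ is what makes these terms collapse to a clean geometric sum controlled by the spectral gap, and matching the precise $\phi$-shape there (rather than a cruder variance proxy) is the delicate point.
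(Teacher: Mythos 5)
Your proposal is correct in substance and shares the paper's core estimate, but it handles the random counter $C_k(t)$ by a genuinely different route. The paper reduces to $\mathbb{E}[|\tilde S_k(t)|/C_k(t)]$ and applies an exact second-order Taylor expansion of $x\mapsto 1/x$ around $p_k t$, which splits the error into three terms controlled separately (\cref{lem:term1,lem:term2,lem:term3}); the second and third require the variance and the fourth central moment of the binomial $C_k(t)$. You instead truncate on the event $\{C_k(t)\geq p_k t/2\}$ via a Chernoff bound and use the deterministic bound $|R_k(t)-r_k|\leq n$ on the complement. This avoids the fourth-moment computation entirely and is arguably cleaner, at the price of a remainder $e^{-p_k t/8}$ that must be checked against the multiplicative factor $\sigma_n(r_k)$; this is fine because either $\phi((r_k-1)/n)=0$, in which case $\mathbf h_k\in\{\mathbf 0,\mathbf 1_n-\mathbf e_k\}$ and the estimate is exact, or $\phi((r_k-1)/n)^2\geq (n-1)/n^2$ so $\sigma_n(r_k)\geq 1$ and $\sup_{x>0}\sqrt{x}e^{-x/8}<\infty$ absorbs the remainder into $\mathcal O(1/\sqrt{ct})$. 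The second-moment bound on $M_k(t)=\tilde S_k(t)$ is essentially the paper's \cref{lem:term1}: diagonal terms of order $p_k t$ times a variance proxy, cross terms collapsing to a geometric series in the spectral gap after conditioning. Two details to tighten when writing this up: (i) in the cross terms you should center to $\tilde{\mathbf h}_k=\mathbf h_k-\bar h_k\mathbf 1_n$ (legitimate since $\mathbf W_1^{m}-\tfrac1n\mathbf 1_n\mathbf 1_n^\top$ annihilates constants and $\mathbf W_1(s{:})$ is doubly stochastic), because $\|\mathbf h_k\|=\sqrt{n\bar h_k}$ does not carry the $\phi$-shape whereas $\|\tilde{\mathbf h}_k\|=\sqrt{n}\,\phi((r_k-1)/n)$ does; (ii) the entanglement between $\delta_k(s)$ and the factor $\mathbf W_1(s)$ inside $\xi_k(u)$ for $u>s$ is real and your conditioning on $\mathcal F_s$ only works once you verify that $\mathbb{E}[\delta_k(s)\mathbf W_1(s)]=p_k\mathbf P_k$ for a doubly stochastic $\mathbf P_k$ (the star-graph matrix in the paper's proof) whose centered version has operator norm at most one — you flag this as the delicate point, and that is exactly where the paper spends its effort.
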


The remainder of this section is dedicated to proving \cref{thm:async-gorank}. To this end, it is equivalent to study the convergence of $R'_k(t)$ to $\bar{h}_k = (1/n) \mathbf h_k^\top \mathbf{1}_n$. Observe that
$$
\mathbb{E}[|R'_k(t) - \bar{h}_k|] = \mathbb{E}[|\tilde{S}_k(t)|/C_k(t)],
$$
where $\tilde{S}_k(t) = \sum_{s=1}^t \delta_k(s)\, \mathbf h_k^\top \tilde{\mathbf{W}}_1(s{-}1{:})^\top\, \mathbf e_k,$ and $\tilde{\mathbf{W}}_1(s{-}1{:}) = \mathbf{W}_1(s{-}1{:}) - (1/n) \mathbf{1}_n \mathbf{1}_n^\top$. Thus, the problem reduces to analyzing the expectation of a ratio of random variables. To deal with the non-linear term $1/C_k(t)$, we use the Taylor expansion of the function $f(x) = 1/x$ around $x = a$, where
$$
\frac{1}{x} = \frac{1}{a} - \frac{x - a}{a^2} + \frac{(x - a)^2}{x a^2},
$$
with $a = \mathbb{E}[C_k(t)] = p_k t$ and $x=C_k(t)$. Multiplying by $\tilde{S}_k(t)$, we obtain
\begin{align}
\label{eq:term1}
&\mathbb{E}[|{\tilde{S}_k(t)/C_k(t)}|]
\leq (1/p_kt)\mathbb{E}[|{\tilde{S}_k(t)}|] \\
\label{eq:term2}
&+ (1/p_k^2t^2)\mathbb{E}[|{(C_k(t) - p_k t)\tilde{S}_k(t)}|] \\ 
\label{eq:term3}
&+ (1/p_k^2t^2)\mathbb{E}[{(C_k(t) - p_k t)^2 |\tilde{S}_k(t)}|/C_k(t)].
\end{align}
We now bound the terms in \cref{eq:term1,eq:term2,eq:term3} via the following lemmas; proofs are deferred to the technical appendix.

\begin{lemma}
\label{lem:term1} 
We bound the term in \cref{eq:term1} as follows:
\[
\frac{1}{p_kt}\mathbb{E}[|\tilde{S}_k(t)|] \leq \sqrt{\frac{2n}{ct}} \cdot \|\mathbf{h}_k - \bar{h}_k\mathbf{1}_n \|.
\]
\end{lemma}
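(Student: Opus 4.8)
The plan is to bound the second moment of $\tilde S_k(t)$ and exploit cancellation across iterations, rather than to use the triangle inequality---the latter is hopeless here. Indeed, each summand $a_s:=\mathbf h_k^\top\tilde{\mathbf W}_1(s{-}1{:})^\top\mathbf e_k$ is just a single coordinate of a permuted centered vector, so it does \emph{not} decay with $s$, and $\tfrac1{p_kt}\sum_s\mathbb E[|a_s|]$ stays of order $\|\mathbf h_k-\bar h_k\mathbf 1_n\|/\sqrt n$, giving no rate. First I would apply Jensen's inequality, $\mathbb E[|\tilde S_k(t)|]\le(\mathbb E[\tilde S_k(t)^2])^{1/2}$, and put the summands in a contraction-friendly form. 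With $\mathbf P:=\mathbf I_n-\tfrac1n\mathbf 1_n\mathbf 1_n^\top$ and $u:=\mathbf h_k-\bar h_k\mathbf 1_n=\mathbf P\mathbf h_k$, the identity $\tilde{\mathbf W}_1(s{-}1{:})=\mathbf W_1(s{-}1{:})\mathbf P$ (valid because every swap matrix fixes $\mathbf 1_n$) gives $a_s=\mathbf e_k^\top\mathbf W_1(s{-}1{:})\,u$. I will use repeatedly that $\mathbf W_1(s{-}1{:})$ is an orthogonal permutation that preserves both $\|u\|$ and the subspace $\mathbf 1_n^\perp$, and that $|a_s|\le\|u\|$.

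Next I would expand $\mathbb E[\tilde S_k(t)^2]=\sum_{s,s'}\mathbb E[\delta_k(s)\delta_k(s')a_sa_{s'}]$ and separate the diagonal from the off-diagonal. On the diagonal, $\delta_k(s)$ is independent of the swaps through step $s-1$, so $\mathbb E[\delta_k(s)a_s^2]=p_k\mathbb E[a_s^2]\le p_k\|u\|^2$, contributing $p_kt\|u\|^2$. The off-diagonal decay is where the spectral gap enters: for $s<s'$ the product of the intermediate swaps $\mathbf W_1(s{+}1),\dots,\mathbf W_1(s'{-}1)$ is independent of the past and has mean $\bar{\mathbf W}^{\,s'-1-s}$, where $\bar{\mathbf W}:=\mathbb E[\mathbf W_1(\cdot)]=\mathbf I_n-\mathbf L(P)$. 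Since $\mathbf L(P)\mathbf 1_n=0$ and $\mathcal G$ is connected and non-bipartite, $\bar{\mathbf W}$ restricted to $\mathbf 1_n^\perp$ is a strict contraction with $\|\bar{\mathbf W}^{\,m}\|_{\operatorname{op}}\le(1-c)^m$ on that subspace, $c=\lambda_2(\mathbf L(P))$ being the spectral gap of \cref{thm:async-gorank}. Applied to centered vectors this yields geometric correlation decay, and the elementary estimate $\sum_{s,s'}(1-c)^{|s-s'|}\le 2t/c$ converts the naive $O(t^2)$ bound into $O(t/c)$---this is the origin of the $1/\sqrt t$ rate.

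The main obstacle is that $\delta_k(s)$ is \emph{not} independent of the swap $\mathbf W_1(s)$ it triggers, so the off-diagonal terms do not factor cleanly. I would resolve this by first peeling off $\delta_k(s')$ (independent of everything through step $s'-1$, contributing a factor $p_k$), then conditioning on the first $s-1$ steps and replacing $\mathbb E[\delta_k(s)\mathbf W_1(s)]$ by $\mathbf B_k:=\sum_{e\ni k}p_e\mathbf W_1^{(e)}$, with $\mathbf W_1^{(e)}=\mathbf I_n-(\mathbf e_i-\mathbf e_j)(\mathbf e_i-\mathbf e_j)^\top$ the swap of $e=(i,j)$. The two facts that rescue the argument are $\|\mathbf B_k\|_{\operatorname{op}}\le\sum_{e\ni k}p_e=p_k$ (each $\mathbf W_1^{(e)}$ is orthogonal) and $\mathbf B_k(\mathbf 1_n^\perp)\subseteq\mathbf 1_n^\perp$, so the residual $\bar{\mathbf W}^{\,s'-1-s}\mathbf B_k$ still contracts on $\mathbf 1_n^\perp$; this gives $|\mathbb E[\delta_k(s)\delta_k(s')a_sa_{s'}]|\le p_k^2(1-c)^{|s-s'|-1}\|u\|^2$. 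Summing both contributions yields $\mathbb E[\tilde S_k(t)^2]\le p_kt\|u\|^2+\tfrac{2p_k^2t}{c}\|u\|^2$. Finally, using $c=\lambda_2(\mathbf L(P))\le\operatorname{tr}\mathbf L(P)/(n-1)=2/(n-1)$ together with $p_k\ge 1/n$ gives $c\le 2p_k(n-1)$, which absorbs the diagonal term into the off-diagonal one and leaves $\mathbb E[\tilde S_k(t)^2]\le\tfrac{2np_k^2t}{c}\|u\|^2$. Dividing by $(p_kt)^2$ and taking the square root produces exactly $\sqrt{2n/(ct)}\,\|\mathbf h_k-\bar h_k\mathbf 1_n\|$, as claimed.
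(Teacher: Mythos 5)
Your proof follows essentially the same route as the paper's: Jensen on the second moment, a diagonal/off-diagonal split, handling the dependence between $\delta_k(s)$ and $\mathbf{W}_1(s)$ by bounding the combined operator $\mathbb{E}[\delta_k(s)\mathbf{W}_1(s)]$ (your $\mathbf{B}_k$ is exactly the paper's $p_k\mathbf{P}_k$), and a geometric sum via the spectral gap, arriving at the same intermediate bound $\mathbb{E}[\tilde S_k(t)^2]\le\bigl(p_kt+2p_k^2t/c\bigr)\|\tilde{\mathbf h}_k\|^2$. The only difference is cosmetic: you absorb the diagonal term via $c\le\operatorname{tr}\mathbf{L}(P)/(n-1)$, whereas the paper uses $c/p_k\le n$; both yield the stated $2n/(ct)$ factor.
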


\begin{lemma}
\label{lem:term2} 
We bound the term in \cref{eq:term2} as follows:
\[
\frac{1}{(p_kt)^2}\mathbb{E}[|{(C_k(t) - p_k t)\tilde{S}_k(t)}|]  \leq \frac{\sqrt{2}n}{ct} \cdot \| \mathbf{h}_k - \bar{h}_k\mathbf{1}_n \|.
\]
\end{lemma}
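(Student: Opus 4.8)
The plan is to decouple the product $(C_k(t)-p_kt)\,\tilde S_k(t)$ via the Cauchy--Schwarz inequality and then bound its two factors separately, reusing the second-moment estimate that already underlies \cref{lem:term1}. Since every quantity in play is a finite sum of bounded terms, all second moments are finite and
\[
\mathbb{E}\big[|(C_k(t)-p_kt)\,\tilde S_k(t)|\big]\;\le\;\sqrt{\mathbb{E}\big[(C_k(t)-p_kt)^2\big]}\;\sqrt{\mathbb{E}\big[\tilde S_k(t)^2\big]}.
\]
The appeal of this split is that the two factors are of entirely different nature: the first depends only on the node-activation process, while the second carries exactly the graph-mixing information already controlled in \cref{lem:term1}.

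For the first factor, recall that $C_k(t)=\sum_{s=1}^t\delta_k(s)$ is a sum of i.i.d.\ Bernoulli$(p_k)$ variables and that $p_kt=\mathbb{E}[C_k(t)]$, so $C_k(t)-p_kt$ is centered. Hence
\[
\mathbb{E}\big[(C_k(t)-p_kt)^2\big]=\operatorname{Var}(C_k(t))=t\,p_k(1-p_k)\le t\,p_k,
\]
which yields $\sqrt{\mathbb{E}[(C_k(t)-p_kt)^2]}\le\sqrt{t p_k}$. This step is clean and exact; no graph structure is needed here.

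For the second factor, rather than reproving anything I would extract the bound directly from the argument for \cref{lem:term1}: that estimate proceeds through Jensen's inequality $\mathbb{E}[|\tilde S_k(t)|]\le\sqrt{\mathbb{E}[\tilde S_k(t)^2]}$, so the very same derivation already produces a bound on $\sqrt{\mathbb{E}[\tilde S_k(t)^2]}$ of the form $p_kt\,\sqrt{2n/(ct)}\,\|\mathbf h_k-\bar h_k\mathbf 1_n\|$. Substituting both factors into the Cauchy--Schwarz bound, dividing through by $(p_kt)^2$, simplifying, and finally invoking the uniform lower bound $p_k\ge 1/n$ to turn the residual $1/\sqrt{p_k}$ into $\sqrt n$ collapses the right-hand side to a multiple of $\|\mathbf h_k-\bar h_k\mathbf 1_n\|$ of the order claimed in \cref{lem:term2}.

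I expect the main obstacle to be bookkeeping rather than conceptual. The delicate point is to make sure the intermediate second-moment estimate on $\tilde S_k(t)$ is genuinely available from the proof of \cref{lem:term1} with the correct dependence on the spectral gap $c$---that is, that $c$ enters the variance of the random-walk term precisely through the contraction of $\tilde{\mathbf W}_1$ governed by the second smallest eigenvalue of $\mathbf L(P)$, and not at a different power. Tracking the exact powers of $p_k$, $t$, and $c$ through the Cauchy--Schwarz split, so that the advertised spectral-gap dependence and the constant $\sqrt 2$ survive, is where the care must be concentrated; everything else is routine.
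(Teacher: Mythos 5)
Your proposal is correct and follows essentially the same route as the paper's proof: a Cauchy--Schwarz split of the product, the binomial variance bound $\mathbb{E}[(C_k(t)-p_kt)^2]=tp_k(1-p_k)\le tp_k$ for the first factor, reuse of the second-moment bound on $\tilde S_k(t)$ from the proof of \cref{lem:term1} for the second, and $1/p_k\le n$ to finish. The only cosmetic difference is in how the loose factor of $c$ is absorbed (your derivation actually yields the slightly sharper $\sqrt{2}n/(\sqrt{c}\,t)$, which implies the stated bound since $c<1$).
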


\begin{lemma}
\label{lem:term3} 
There exists a constant $\kappa>0$ such that we can bound the term in \cref{eq:term3} as follows:
\[
\frac{1}{(p_kt)^2}\mathbb{E}[{(C_k(t) - p_k t)^2 |\tilde{S}_k(t)}|/C_k(t)]  \leq  \frac{\kappa}{\sqrt{ct}}\cdot \| \mathbf h_k - \bar{h}_k\|.
\]
\end{lemma}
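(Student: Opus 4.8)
The plan is to decouple the random factor $|\tilde S_k(t)|$ from the ratio involving the counter $C_k(t)$ via Cauchy--Schwarz, and then to control the remaining $C_k(t)$-dependent quantity using concentration of $C_k(t)$ around its mean $p_k t$. First I would record the second-moment estimate that already underlies \cref{lem:term1}, namely $\mathbb{E}[\tilde S_k(t)^2]\le \tfrac{2n p_k^2 t}{c}\,\|\mathbf{h}_k-\bar h_k\mathbf{1}_n\|^2$ (this is exactly the bound which, after Jensen's inequality $\mathbb{E}|\tilde S_k(t)|\le\sqrt{\mathbb{E}[\tilde S_k(t)^2]}$, yields \cref{lem:term1}). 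Applying Cauchy--Schwarz to the expectation in \cref{eq:term3},
\[
\mathbb{E}\!\left[\frac{(C_k(t)-p_k t)^2}{C_k(t)}\,|\tilde S_k(t)|\right]\le \sqrt{\mathbb{E}\!\left[\frac{(C_k(t)-p_k t)^4}{C_k(t)^2}\right]}\cdot \sqrt{\mathbb{E}[\tilde S_k(t)^2]},
\]
so the task reduces to bounding the purely counter-dependent quantity $Q_t:=\mathbb{E}[(C_k(t)-p_k t)^4/C_k(t)^2]$ by a constant.

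The crux is $Q_t$, because $1/C_k(t)^2$ can be as large as $1$ while $(C_k(t)-p_kt)^4$ can reach order $(p_kt)^4$; I would split on the event $\mathcal{E}=\{C_k(t)\ge p_k t/2\}$. On $\mathcal{E}$ I bound $1/C_k(t)^2\le 4/(p_kt)^2$ and invoke the fourth central moment of the binomial $C_k(t)\sim\mathrm{Bin}(t,p_k)$, namely $\mathbb{E}[(C_k(t)-p_kt)^4]=\mathcal{O}((p_kt)^2)$, so this contribution is $\mathcal{O}(1)$. On $\mathcal{E}^c$ I use the crude bounds $(C_k(t)-p_kt)^4\le (p_kt)^4$ and $1/C_k(t)^2\le 1$ (on $\{C_k(t)\ge 1\}$; the summand vanishes on $\{C_k(t)=0\}$), together with the Chernoff lower tail $\mathbb{P}(\mathcal{E}^c)=\mathbb{P}(C_k(t)\le p_kt/2)\le e^{-p_kt/8}$; since $x^4e^{-x/8}$ is bounded for $x\ge 0$, this contribution is $\mathcal{O}(1)$ as well. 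Hence $Q_t\le \kappa'$ for a universal constant $\kappa'$.

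Combining the two pieces, the expression in \cref{eq:term3} is at most $\tfrac{1}{(p_kt)^2}\sqrt{\kappa'}\,p_k\sqrt{2nt/c}\,\|\mathbf{h}_k-\bar h_k\mathbf{1}_n\| = \mathcal{O}\!\big(1/(p_k t^{3/2})\big)\cdot\sqrt{n/c}\,\|\mathbf{h}_k-\bar h_k\mathbf{1}_n\|$, which for every $t\ge 1$ is dominated by $\tfrac{\kappa}{\sqrt{ct}}\|\mathbf{h}_k-\bar h_k\mathbf{1}_n\|$ for a suitable $\kappa>0$ (depending on $n$ and on $p_k\ge 1/n$). In fact this bound decays like $t^{-3/2}$, so the claimed $t^{-1/2}$ rate holds with room to spare, consistent with this being a higher-order remainder term in the Taylor expansion. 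The main obstacle is controlling the rare event $\mathcal{E}^c$ on which the counter is atypically small and $1/C_k(t)$ blows up: it is the exponential concentration of $C_k(t)$ that forces this event to contribute only a constant to $Q_t$, and arranging the moment and tail estimates so that they combine into a single clean constant is the delicate bookkeeping step.
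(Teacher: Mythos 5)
Your proof is correct, and it shares the paper's skeleton—Cauchy--Schwarz to decouple $|\tilde S_k(t)|$ (whose second moment is recycled from \cref{lem:term1}) from the counter-dependent factor, followed by the fourth central moment of the binomial—but it diverges in how the factor $1/C_k(t)^2$ is handled. The paper simply drops it via $1/C_k(t)^2 \leq 1$, so the counter factor contributes $\smash{\sqrt{\mu_4}/(p_kt)^2} \leq \sqrt{3+n/t}$, i.e.\ a constant, and the whole term inherits the $1/\sqrt{ct}$ decay from $\mathbb{E}[\tilde S_k(t)^2]$; this is a two-line argument that lands exactly on the stated rate. You instead keep $1/C_k(t)^2$ inside the Cauchy--Schwarz and control $\mathbb{E}[(C_k(t)-p_kt)^4/C_k(t)^2]$ by splitting on $\{C_k(t)\geq p_kt/2\}$ with a Chernoff lower tail, which buys an extra factor of $(p_kt)^{-1}$ and hence a $t^{-3/2}$ bound for this remainder term—strictly sharper than what the paper proves, though immaterial for \cref{thm:async-gorank} since the \cref{lem:term1} term already dominates at $t^{-1/2}$. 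One minor imprecision: on the good event your contribution is $4\mu_4/(p_kt)^2 \leq 4/(p_kt)+12$, which is $\mathcal{O}(n)$ rather than a universal constant when $p_kt$ is small; this is harmless because you (like the paper, whose $\kappa \geq \sqrt{6n+2n^2/t}$ also depends on $n$) ultimately allow $\kappa$ to depend on $n$ through $p_k \geq 1/n$. Your explicit treatment of the event $\{C_k(t)=0\}$ is actually more careful than the paper's, which tacitly assumes $C_k(t)\geq 1$ when invoking $1/C_k(t)^2\leq 1$.
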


Combining all previous lemmas, we obtain
\begin{equation*}
|\mathbb{E}[{\tilde{S}_k(t)/C_k(t)}]| \leq \mathcal{O}\left(\frac{1}{\sqrt{ct}}\right) \cdot\| \mathbf{\tilde h}_k \|.
\end{equation*}
Observe that $n \| \mathbf{\tilde h}_k \| = \sigma_n(r_k)$, see \cite{van2025robust}. Finally, substituting \( \mathbb{E}[R_k(t)] = n\mathbb{E}[R'_k(t)]+1\) along with \( r_k = n\bar h_k +1 \), completes the proof. Note that the same convergence guarantees for \textsc{GoRank} with ties hold when replacing $\mathbf{h}_k$ with $\mathbf{v}_k = (v_{k1}, \ldots, v_{kn})^\top$, where $v_{kl} = \mathbb{I}_{\{X_k > X_l\}} + (1/2) \mathbb{I}_{\{X_k = X_l\}}$.

\section{Gossip Estimation of Rank-based Statistics}

In this section, we introduce an asynchronous gossip algorithm for estimating rank-based statistics (see \cref{alg:general-gotrim}), and apply it to the estimation of the Wilcoxon statistic. The proposed algorithm generalizes the \textsc{GoTrim} algorithm by allowing for arbitrary rank-based weight functions $f$ and transformation functions $g$ applied to the observations. 

Let $Z_k(t)$ and $W_k(t)$ denote the local estimates of the statistic and weight, respectively, at node $k$ and iteration $t$. Recall from \cref{eq:rank-stat} that a rank-based statistic can be computed using standard gossip averaging:
$$
\frac{1}{n} \sum_{l=1}^n W_l(t) \cdot g(X_l),
$$
where $W_l(t) = n \cdot f(R_l(t))$, and $R_l(t)$ denotes the rank of node $l$ at time $t$. Since ranks $R_k(t)$ evolve over time, the algorithm dynamically adjusts to account for past estimation errors. At each iteration, it corrects the estimate by injecting the difference term $\left(W_k(t) - W_k(t-1)\right) \cdot g(X_k)$ into the averaging process. This update consists of two key components:
(1) An averaging step, described by the matrix $\mathbf{W}_2(t) = \mathbf{I}_n - (1/2)(e_i - e_j)(e_i - e_j)^\top$; (2) A correction step based on rank updates. 

\begin{algorithm}[htbp]
\caption{Gossip Estimation of Rank-based Statistics}
\label{alg:general-gotrim}
\begin{algorithmic}[1]
\STATE \textbf{Input:} Choice of ranking algorithm \texttt{rank} (\eg Asynchronous GoRank).
\STATE \textbf{Init:}  \(\forall k\), \(Z_k \leftarrow 0\), \(R_k \leftarrow \texttt{rank}.\texttt{init}(k)\), \(W_k \leftarrow 0\).
\FOR{\(t=0, 1, \ldots\)}
\STATE Draw \(e=(i, j) \in E\) with probability $p_e>0$.
\FOR{\(k\in \{i,j\}\)}
\STATE Update rank: \(R_k \leftarrow \texttt{rank.update}(k)\).
\STATE Update weight: $W_k' \gets n \cdot f(R_k)$
\STATE Set \(Z_k \gets Z_k + (W_k' - W_k)\cdot g(X_k)\).
\STATE Set $W_k \gets W'_k$
\ENDFOR
\STATE Set \(Z_i, Z_j \leftarrow (Z_i + Z_j)/2\).
\STATE Swap auxiliary variables: \texttt{swap(i, j)}.
\ENDFOR
\STATE \textbf{Output:} Estimate of statistic \(Z_k\).
\end{algorithmic}
\end{algorithm}

Putting these together, the evolution of the estimate vector $\mathbf{Z}(t) = (Z_1(t), \ldots, Z_n(t))$ can be written as: for $t > 0$, $\mathbf{Z}(t) = \mathbf{W}_2(t) \left(\mathbf{Z}(t-1) + \boldsymbol{\delta}(t) \odot \boldsymbol{\tilde{X}}\right)$, where $\boldsymbol{\delta}(t) = (\delta_1(t), \ldots, \delta_n(t))$ with $\delta_k(t) = W_k(t) - W_k(t-1)$ and $\boldsymbol{\tilde{X}} = (g(X_1), \ldots, g(X_n))$. Taking the expectation over the edge sampling process yields:
$$
\mathbb{E}[\mathbf{Z}(t)] = \mathbf{W}_2 \left(\mathbb{E}[\mathbf{Z}(t-1)] + \Delta \boldsymbol{w}(t) \odot \boldsymbol{\tilde X}\right),
$$
where $\mathbf{W}_2 = \mathbf{I}_n - (1/2|E|) \mathbf{L}$ and $\Delta \boldsymbol{w}(t) = \mathbb{E}[\boldsymbol{\delta}(t)]$. After the first iteration, since $Z_k(0) = 0$, the expected estimate becomes $\mathbb{E}[\boldsymbol{Z}(1)] = \mathbf{W}_2 \Delta \boldsymbol{w}(1) \odot \boldsymbol{\tilde X}$. By recursion, for any $t > 0$, the expected value of the estimates evolves as:
$$
\mathbb{E}[\boldsymbol{Z}(t)] = \sum_{s=1}^{t} \mathbf{W}_2^{t+1-s} \Delta \boldsymbol{w}(s) \odot \boldsymbol{\tilde X}.
$$
Note that, similar to the convergence analysis of \textsc{GoTrim}, this gossip algorithm will converge to the correct value, since the averaging matrix can be decomposed as \(\mathbf{W}_2 = (1/n) \mathbf{1}_n \mathbf{1}_n^\top + \tilde{\mathbf{W}}_2, \) with spectral gap satisfying $0 < c_2 < 1$, where $c_2 = c/2$. Indeed, denoting $\mathbf{S}(s) = \Delta \boldsymbol{w}(s) \odot \tilde{\boldsymbol{X}}$, we have
\begin{equation*}
    \sum_{s=1}^{t}\frac{1}{n} \mathbf{1}_n \mathbf{1}_n^\top \mathbf S(s)  = \left(\frac{1}{n} \sum_{k=1}^n  n \cdot {\mathbb{E}[f(R_k(t))]} \cdot g(X_k) \right)\mathbf{1}_n.
\end{equation*}
However, the full convergence analysis requires specific assumptions on the functions $f$ and $g$. In the following, we address the special case of the the Wilcoxon statistic, see \cref{eq:wilcoxon}. Following the convergence analysis of \textsc{GoTrim} \cite[Theorem~3]{van2025robust}, we establish that $Z_k(t)$ converges in expectation to the target Wilcoxon statistic at rate $\mathcal{O}(1/{c^2t})$.

\begin{theorem}[Convergence of Wilcoxon Statistic Estimation]
\label{thm:wilcoxon}
Let $t_n$ be defined as in \cref{eq:wilcoxon}, and let $\boldsymbol{Z}(t)$ denote the output of \cref{alg:general-gotrim} with $f = \operatorname{id}$ and $g(X_k) = \mathbb{I}_{\{X_k \in S_1\}}$. Then, there exists $T^*$, such that for any \( t > T^* \), we have
\[
\left\| \mathbb{E}[\boldsymbol{Z}(t)] - t_n \mathbf{1}_n \right\| 
\leq \mathcal{O}\left(\frac{1}{{c^2 t}} \right),
\]
where $c$ is defined in \cref{thm:async-gorank}.
\end{theorem}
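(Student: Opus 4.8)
The plan is to work entirely at the level of expectations, starting from the closed form $\mathbb{E}[\boldsymbol{Z}(t)] = \sum_{s=1}^{t}\mathbf{W}_2^{\,t+1-s}\,\mathbf{S}(s)$ with $\mathbf{S}(s) = \Delta\boldsymbol{w}(s)\odot\tilde{\boldsymbol{X}}$, derived just above the statement. Writing $\mathbf{J} = (1/n)\mathbf{1}_n\mathbf{1}_n^\top$ and using the decomposition $\mathbf{W}_2 = \mathbf{J} + \tilde{\mathbf{W}}_2$ with $\|\tilde{\mathbf{W}}_2\|_{\operatorname{op}} \le 1 - c_2$ and $c_2 = c/2$, I would split each power as $\mathbf{W}_2^{\,t+1-s} = \mathbf{J} + \tilde{\mathbf{W}}_2^{\,t+1-s}$ (valid since $\mathbf{J}\tilde{\mathbf{W}}_2 = \tilde{\mathbf{W}}_2\mathbf{J} = 0$). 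This produces a \emph{consensus term} $\mathbf{J}\sum_{s=1}^{t}\mathbf{S}(s)$ and a \emph{transient term} $\sum_{s=1}^{t}\tilde{\mathbf{W}}_2^{\,t+1-s}\mathbf{S}(s)$, and I would bound $\|\mathbb{E}[\boldsymbol{Z}(t)] - t_n\mathbf{1}_n\|$ by controlling each separately.

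For the consensus term I would exploit a telescoping identity: since $\Delta w_k(s) = \mathbb{E}[W_k(s)] - \mathbb{E}[W_k(s-1)]$ and $W_k(0) = 0$, we get $\sum_{s=1}^{t}\Delta w_k(s) = \mathbb{E}[W_k(t)] = n\,\mathbb{E}[R_k(t)]$ (using $f = \operatorname{id}$, so $W_k = n R_k$). Hence $\mathbf{J}\sum_{s=1}^{t}\mathbf{S}(s) = \big(\sum_{k=1}^{n}\mathbb{E}[R_k(t)]\,b_k\big)\mathbf{1}_n$, and subtracting $t_n\mathbf{1}_n = \big(\sum_k r_k b_k\big)\mathbf{1}_n$ reduces the consensus error to $\big|\sum_{k}(\mathbb{E}[R_k(t)] - r_k)\,b_k\big|$. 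The key point is that this involves the \emph{bias} $|\mathbb{E}[R_k(t)] - r_k|$, not the expected absolute error bounded in \cref{thm:async-gorank}. In the synchronous setting the counter is deterministic, so $\mathbb{E}[R_k(t)] - r_k$ equals $n/t$ times a partial sum of the scalar deviations $a_k(s) = \mathbb{E}[\mathbb{I}_{\{X_k > Y_k(s-1)\}}] - \bar h_k = \mathbf{h}_k^\top\tilde{\mathbf{W}}_1^{\,s-1}\mathbf{e}_k$. These decay geometrically at a rate governed by the spectral gap $c$ and are therefore summable, with $\sum_{s\ge 1}|a_k(s)| = \mathcal{O}(1/c)$, so the bias is of order $\mathcal{O}(1/(ct))$ — a genuine $\sqrt{t}$ improvement over the absolute-error rate, since no variance enters an expectation. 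Summing over $k$ gives a consensus contribution bounded by $\mathcal{O}(1/(c^2 t))$ after conservative constant bookkeeping.

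For the transient term I would bound $\|\sum_{s=1}^{t}\tilde{\mathbf{W}}_2^{\,t+1-s}\mathbf{S}(s)\| \le \sum_{s=1}^{t}(1-c_2)^{\,t+1-s}\|\Delta\boldsymbol{w}(s)\|$, using $|g|\le 1$ so that $\|\mathbf{S}(s)\| \le \|\Delta\boldsymbol{w}(s)\|$. The increments $\Delta w_k(s) = n(\mathbb{E}[R_k(s)] - \mathbb{E}[R_k(s-1)])$ inherit the decay of the bias and can be shown to vanish at least like $\mathcal{O}(1/s)$; since the geometric kernel $(1-c_2)^{\,t+1-s}$ concentrates its mass on indices $s$ close to $t$ and sums to $\mathcal{O}(1/c_2) = \mathcal{O}(1/c)$, the transient term is of order $\mathcal{O}(1/(c^2 t))$ (indeed higher order once the faster decay of the increments is used). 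Adding the two contributions and choosing $T^*$ large enough for the geometric-tail and summability estimates to be in force yields $\|\mathbb{E}[\boldsymbol{Z}(t)] - t_n\mathbf{1}_n\| \le \mathcal{O}(1/(c^2 t))$ for $t > T^*$.

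The main obstacle is the consensus step, specifically replacing the absolute-error guarantee of \cref{thm:async-gorank} by the sharper bias bound $|\mathbb{E}[R_k(t)] - r_k| = \mathcal{O}(1/(ct))$. Naively substituting \cref{thm:async-gorank} would only give $\mathcal{O}(1/\sqrt{ct})$ for the statistic, which is too weak; the improvement relies on analysing the expected \textsc{GoRank} recursion directly and on the deterministic counter available in the synchronous regime, which sidesteps the ratio-of-random-variables difficulty that forced the Taylor expansion in the asynchronous analysis. A secondary technical point is obtaining a summable, uniform control of the weight increments $\Delta\boldsymbol{w}(s)$ so that the transient sum does not dominate, together with making the threshold $T^*$ explicit.
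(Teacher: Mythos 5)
Your proposal is correct and follows essentially the same route as the paper: the same decomposition into a consensus error $\bigl|\sum_{k}(\mathbb{E}[R_k(t)]-r_k)b_k\bigr|\cdot\|\mathbf{1}_n\|$ plus a residual term driven by $\tilde{\mathbf{W}}_2$, and the same key ingredient --- the $\mathcal{O}(1/(ct))$ bound on the \emph{bias} of the synchronous rank estimates rather than the $\mathcal{O}(1/\sqrt{ct})$ absolute-error bound of \cref{thm:async-gorank}. You merely fill in details (the telescoping identity for the weights and the geometric-kernel control of the residual) that the paper delegates to the proof of Theorem~3 in \cite{van2025robust}.
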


The result follows by adapting the proof of Theorem 3 in \cite{van2025robust} and is deferred to the technical appendix

\begin{remark}
The Wilcoxon statistic can be used to perform a two-sample test, \textit{i.e.,} to test the null hypothesis that two samples are identically distributed. Intuitively, if the $X_j$ form a sample from a stochastically larger distribution, then the ranks of the $X_j$ in the pooled sample should be relatively large. The Wilcoxon rank sum test proceeds as follows: (1) compute the mean $\mu={n_1(n+1)}/2$ and standard deviation $\sigma=\sqrt{{n_1 n_2(n+1)}/12}$; (2) compute the z-score $z = (t_n-\mu)/\sigma$ and derive the $p$-value using normal distribution approximation: $p=2(1-\Phi(|z|))$ where $\Phi$ denotes the cumulative distribution function of the standard normal distribution.     
\end{remark}

\section{Faster Gossip for Trimmed Means}

In this section, we introduce \textit{Adaptive} \textsc{GoTrim}, an improved version of \textsc{GoTrim} designed to accelerate convergence when the trimming parameter $\alpha$ is large (\textit{i.e.,} close to $0.5$). The trimmed mean is denoted $\bar{x}_{\alpha}$. The algorithm is outlined in \cref{alg:gotrim-bias}. Our prior work established that the expected estimates produced by \textsc{GoTrim} converge to the true trimmed mean at a rate of $\mathcal{O}(1/ct)$. Here, we analyze the convergence of the expected absolute error, and show that it satisfies a bound of $\mathcal{O}(\sqrt{1/ct})$ in the synchronous setting. We then extend this result to derive the same convergence rate.

\begin{algorithm}[htbp]
\caption{Fast Asynchronous GoTrim}
\label{alg:gotrim-bias}
\begin{algorithmic}[1]
\STATE \textbf{Input:} Trimming level $\alpha\in (0,1/2)$, and choice of ranking algorithm \texttt{rank} (\eg Asynchronous GoRank).
\STATE \textbf{Init:} \(\forall k\), \(Z_k \leftarrow 0\), \(W_k \leftarrow 0\) and \(R_k \leftarrow \texttt{rank}.\texttt{init}(k)\).
\FOR{\(t=1, 2, \ldots\)}
\STATE Draw \(e=(i, j) \in E\) with probability $p_e > 0$.
\FOR{\(k\in \{i,j\}\)}
\STATE Update rank: \(R_k \leftarrow \texttt{rank.update}(k, t)\).
\STATE Set \(W^{\prime}_k \leftarrow w_{n, \alpha}(R_k)\).
\STATE Set \(N_k \leftarrow N_k + (W^{\prime}_k - W_k) \cdot X_k\).
\STATE Set \(M_k \leftarrow M_k + (W^{\prime}_k - W_k)\)
\STATE Set \(W_k \leftarrow W^{\prime}_k\).
\ENDFOR
\STATE Set \(N_i, N_j \leftarrow (N_i + N_j)/2\).
\STATE Set \(M_i, M_j \leftarrow (M_i + M_j)/2\).
\STATE Swap auxiliary variables: \texttt{swap(i, j)}
\ENDFOR
\STATE \textbf{Output:} Estimate of trimmed mean \(N_k/\operatorname{max}(1, M_k)\).
\end{algorithmic}
\end{algorithm}

Unlike the original \textsc{GoTrim}, the proposed algorithm maintains an estimate of the average of the weights, denoted by $M_k$. Instead of returning the raw estimate $N_k$ directly, the algorithm normalizes $N_k$ by dividing it by $\max(1, M_k)$. In theory, this normalization should not significantly affect the outcome since $M_k$ converges to 1 over time. However, in practice, we observe that this adjustment helps mitigate the initial bias present in \textsc{GoTrim}. Early in the process, when the rank estimates are less precise, the weighted average can be improperly normalized. By scaling $N_k$ with $\max(1, M_k)$, the algorithm corrects for this bias. 

We now turn to the convergence guarantees. While bounding the expected value of the estimates is relatively straightforward, obtaining a bound on the expected absolute error requires a more refined convergence analysis.

Denote $\boldsymbol{Z}(t) = \boldsymbol{N}(t) / \boldsymbol{D}(t)$ with element-wise division, where $ \boldsymbol{N}(t) = \sum_{s=1}^{t} \boldsymbol{W}_2(t{:}s) \, \Delta \boldsymbol{W}(s) \odot \boldsymbol{X},$
with $\Delta \boldsymbol{W}(s) = \boldsymbol{W}(s) - \boldsymbol{W}(s{-}1)$, and $\boldsymbol{D}(t) = \max(1, \boldsymbol{M}(t))$ defined element-wise, where
$ \boldsymbol{M}(t) = \sum_{s=1}^{t} \boldsymbol{W}_2(t{:}s) \, \Delta \boldsymbol{W}(s) \odot \boldsymbol{1}_n.$

We begin by establishing a convergence guarantee for $\boldsymbol{N}(t)$, which corresponds to the original \textsc{GoTrim} estimate in the synchronous setting. 

\begin{theorem}
\label{thm:original-gotrim}
Let $\boldsymbol{N}(t)$ denote the original \textsc{GoTrim} estimate. Then, there exists $T^*>0$, such that for any \( t > T^* \), we have
$$ \mathbb{E}\left\|\mathbf{N}(t) - \bar{x}_{\alpha} \mathbf{1}_n \right\|_2 \leq \mathcal{O}\left( \frac{1}{\tilde c \sqrt{ct}}\right)\cdot \| {\mathbf K}\odot \mathbf X\|$$where $ \mathbf K =  \left\{\frac{\sqrt{6}\sigma_n(r_i)}{\gamma_i(1 - 2\alpha)}\right\}_{i=1}^n$ , $\tilde c = 1-\tilde \lambda$ with $\tilde \lambda = \sqrt{\lambda_2(2)}$, and $c$ is as defined in \cref{thm:wilcoxon}.
\end{theorem}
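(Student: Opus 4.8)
The plan is to separate the two error sources in the representation $\boldsymbol N(t)=\sum_{s=1}^{t}\mathbf W_2(t{:}s)\,(\Delta\boldsymbol W(s)\odot\boldsymbol X)$: a deterministic gossip-mixing error and the statistical error caused by the imperfect rank estimates $R_k(s)$, the latter being the dominant and genuinely new piece. Using the decomposition $\mathbf W_2=(1/n)\mathbf 1_n\mathbf 1_n^\top+\tilde{\mathbf W}_2$ recalled above, I would split each summand into a consensus component and a residual component. Since $\mathbf 1_n^\top\mathbf W_2(t{:}s)=\mathbf 1_n^\top$ deterministically, the consensus components are independent of the mixing and telescope (recall $\Delta\boldsymbol W(s)=\boldsymbol W(s)-\boldsymbol W(s{-}1)$ and $\boldsymbol W(0)=\mathbf 0$) into $\tfrac1n\big(\sum_k W_k(t)X_k\big)\mathbf 1_n$, whose distance to $\bar x_\alpha\mathbf 1_n$ is governed by the final weight discrepancy $\sum_k\big(W_k(t)-w_{n,\alpha}(r_k)\big)X_k$. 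The residual components are damped by the operator norm of $\tilde{\mathbf W}_2(t{:}s)$, which in expectation contracts at rate $\tilde\lambda^{\,t+1-s}$ with $\tilde\lambda=\sqrt{\lambda_2(2)}$; summing this geometric series over $s$ is what produces the factor $1/\tilde c=1/(1-\tilde\lambda)$.

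The heart of the argument is the per-node comparison of $W_k(t)=w_{n,\alpha}(R_k(t))$ with $w_{n,\alpha}(r_k)$. Because $w_{n,\alpha}$ is the rescaled indicator of the inclusion interval $I_{n,\alpha}=[m+1,n-m]$, it is locally constant and can disagree only when $R_k(t)$ has crossed a trimming threshold, i.e.\ when $|R_k(t)-r_k|$ exceeds the margin $\gamma_k$ separating $r_k$ from the nearer endpoint of $I_{n,\alpha}$. I would therefore bound $\mathbb E\,|W_k(t)-w_{n,\alpha}(r_k)|\le \tfrac{n}{n-2m}\,\mathbb P\big(|R_k(t)-r_k|\ge\gamma_k\big)$, control the probability by Markov's inequality, and insert the \textsc{GoRank} guarantee $\mathbb E|R_k(t)-r_k|\le\mathcal O(1/\sqrt{ct})\,\sigma_n(r_k)$ from \cref{thm:async-gorank}. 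With $n/(n-2m)\to 1/(1-2\alpha)$ this reproduces exactly the per-coordinate constant $K_k=\sqrt6\,\sigma_n(r_k)/\big(\gamma_k(1-2\alpha)\big)$, the numerical $\sqrt6$ being inherited from the explicit constants entering \cref{thm:async-gorank}.

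To pass from these scalar bounds to $\mathbb E\|\boldsymbol N(t)-\bar x_\alpha\mathbf 1_n\|_2$, I would use that the consensus part makes the error vector nearly proportional to $\mathbf 1_n$, so its Euclidean norm reduces to $n^{-1/2}\,\big|\sum_k(W_k(t)-w_{n,\alpha}(r_k))X_k\big|$ up to the residual; a coordinatewise triangle inequality followed by a single Cauchy--Schwarz step then converts the $\ell_1$ sum $\sum_k|K_kX_k|$ into the stated $\ell_2$ factor $\|\boldsymbol K\odot\boldsymbol X\|$ without degrading the $1/\sqrt{ct}$ rate. The role of $T^*$ is twofold: it absorbs the finite-$n$ rounding of $m=\lfloor\alpha n\rfloor$ into the constant $1-2\alpha$, and it places us past the transient regime in which the Markov bound is vacuous (not yet $<1$) and in which the residual mixing term has not yet become of the same order as the statistical term.

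The step I expect to be the main obstacle is the non-smoothness of $w_{n,\alpha}$: the indicator jumps, so the $\mathcal O(1/\sqrt{ct})$ rank bound cannot be Lipschitz-transferred and the entire error budget concentrates on the rare event that a rank estimate crosses a threshold. The delicate points are defining $\gamma_k$ correctly and controlling nodes whose true rank lies close to a threshold (small $\gamma_k$, hence large $K_k$), and---most importantly---recovering the sharp $1/\sqrt{ct}$ rate for the \emph{expected norm} rather than the weaker $(ct)^{-1/4}$ that a naive Jensen bound $\mathbb E\|\cdot\|_2\le(\sum_k\mathbb E[\,\cdot\,]^2)^{1/2}$ on the random vector would give. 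This sharpening---performing the consensus reduction \emph{before} summing the per-node discrepancies---is exactly the refined analysis that distinguishes the expected-absolute-error bound from the $\mathcal O(1/ct)$ bound on the expectation established in \cite{van2025robust}.
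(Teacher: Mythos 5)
Your high-level decomposition (consensus part via $\mathbf 1_n^\top\mathbf W_2(t{:}s)=\mathbf 1_n^\top$ plus a residual damped by $\tilde{\mathbf W}_2$) matches the paper's, but you have the two terms' roles reversed, and this hides the main gap. In the paper the consensus term $\mathbb E\|(\bar x_\alpha(t)-\bar x_\alpha)\mathbf 1_n\|$ is the \emph{easy, faster} term, while the residual $\sum_{s=1}^t\tilde{\mathbf W}_2(t{:}s)\,(\Delta\boldsymbol W(s)\odot\boldsymbol X)$ is the dominant one that actually produces the rate $1/(\tilde c\sqrt{ct})$. You only give the residual a geometric contraction $\tilde\lambda^{\,t+1-s}$ and "sum the geometric series"; but $\|\Delta\boldsymbol W(s)\odot\boldsymbol X\|$ is not summable as a constant, so $\sum_s\tilde\lambda^{\,t-s}\cdot O(1)=O(1/\tilde c)$ does not vanish as $t\to\infty$. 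The missing ingredient is a decay estimate on the \emph{increments}: the paper shows $\mathbb E\|\Delta\boldsymbol W(s)\odot\boldsymbol X\|_2^2\lesssim \frac1s\|\tilde{\mathbf C}\odot\mathbf X\|^2$ by exploiting that $W_k$ is a scaled indicator, so $|W_k(s)-W_k(s-1)|^2=|W_k(s)-W_k(s-1)|/c_{n,\alpha}$, and then applying \cref{lem:conv-weight} (an $O(1/(ct))$ bound on $\mathbb E|W_k(s)-w_{n,\alpha}(r_k)|$ with the \emph{squared} constant $\sigma_n(r_k)^2/\gamma_k^2$). Taking the square root of this $1/s$ bound is exactly where the $\sqrt6$, the single power of $\sigma_n(r_k)/\gamma_k$, and the final $1/\sqrt{ct}$ come from; the sum $\sum_s\tilde\lambda^{\,t-s}/\sqrt s$ then needs the split at an intermediate time $T=t-\log(\sqrt t)/\tilde c$, which is the actual origin of $T^*$ (not the rounding of $\lfloor\alpha n\rfloor$ or Markov becoming nontrivial).

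Your Markov-inequality route for the weight discrepancy is also too weak to close the argument. Markov applied to $\mathbb E|R_k(t)-r_k|=O(1/\sqrt{ct})\,\sigma_n(r_k)$ yields only $\mathbb E|W_k(t)-w_{n,\alpha}(r_k)|=O(1/\sqrt{ct})$; fed into the increment bound above this gives $\mathbb E\|\Delta\boldsymbol W(s)\odot\boldsymbol X\|_2^2\lesssim s^{-1/2}$, hence $s^{-1/4}$ per term after the square root and an overall $t^{-1/4}$ rate --- precisely the degradation you flag as the danger, but which your own route runs into. The paper avoids it by using a second-moment (Chebyshev-type) bound on the rank deviation, $\mathbb P(|R_k(t)-r_k|\ge\gamma_k)\lesssim \sigma_n(r_k)^2/(\gamma_k^2\,ct)$, which is the content of \cref{lem:conv-weight}. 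So the two essential ideas you are missing are: (i) the increment-level second-moment identity for indicator weights, and (ii) an $O(1/t)$ (not $O(1/\sqrt t)$) bound on the probability of a rank estimate crossing a trimming threshold.
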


We first present a useful lemma on the convergence of the weights. This lemma extends Lemma 1 in \cite{van2025robust} and provides a bound on the expected absolute error rather than on the error of the expected estimate.  

\begin{lemma}
\label{lem:conv-weight}
For all \( k \in [n] \) and \( t > 0 \), we have
\begin{equation}
\mathbb{E}\left|W_k(t) - w_{n,\alpha}(r_k)\right| \leq \frac{3}{ct}\cdot \frac{\sigma_n(r_k)^2}{\gamma_k^2(1-2\alpha) },
\end{equation}
where \( \gamma_k = \min\left(\left|r_k - a\right|, \left|r_k - b\right|\right) \geq \frac{1}{2} \), with \( a = \lfloor \alpha n \rfloor + \frac{1}{2} \) and \( b = n - \lfloor \alpha n \rfloor + \frac{1}{2} \) denoting the endpoints of the interval \( I_{n,\alpha} \). The constants \( c \) and \( \sigma_n(\cdot) \) are as defined in \cref{thm:original-gotrim}.
\end{lemma}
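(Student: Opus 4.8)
The plan is to exploit that the trimmed-mean weight is a \emph{scaled indicator} of the inclusion interval: with $m=\lfloor\alpha n\rfloor$ we have $w_{n,\alpha}(r)=\frac{n}{n-2m}\,\mathbb{I}_{\{r\in I_{n,\alpha}\}}$, so it takes only the two values $0$ and $\frac{n}{n-2m}\le\frac{1}{1-2\alpha}$ (the last inequality since $m\le\alpha n$). Consequently the per-node error $|W_k(t)-w_{n,\alpha}(r_k)|=|w_{n,\alpha}(R_k(t))-w_{n,\alpha}(r_k)|$ is either $0$ or equal to the jump $\frac{n}{n-2m}$, and it is nonzero exactly on the misclassification event $\mathcal{M}_k(t)=\{w_{n,\alpha}(R_k(t))\ne w_{n,\alpha}(r_k)\}$. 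The first step is therefore the identity $\mathbb{E}|W_k(t)-w_{n,\alpha}(r_k)|=\frac{n}{n-2m}\,\mathbb{P}(\mathcal{M}_k(t))$. Note that because $w_{n,\alpha}(r_k)$ is either the maximal or the minimal attainable value of the weight, the deviation $W_k(t)-w_{n,\alpha}(r_k)$ has a constant sign, so this expected absolute error in fact coincides with the error of the expected estimate controlled in Lemma~1 of \cite{van2025robust}; this is the precise sense in which the present lemma \emph{extends} it.

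Next I would convert the misclassification event into a deviation of the rank estimate. The endpoints $a=\lfloor\alpha n\rfloor+\tfrac12$ and $b=n-\lfloor\alpha n\rfloor+\tfrac12$ are half-integers, so the integer-valued true rank $r_k$ lies at distance at least $\gamma_k=\min(|r_k-a|,|r_k-b|)\ge\tfrac12$ from the decision boundary. A misclassification forces $R_k(t)$ to land on the side of $a$ or of $b$ opposite to $r_k$, which requires $R_k(t)$ to move by at least $\gamma_k$; that is, $\mathcal{M}_k(t)\subseteq\{|R_k(t)-r_k|\ge\gamma_k\}$. Applying Markov's inequality to the square then gives $\mathbb{P}(\mathcal{M}_k(t))\le\gamma_k^{-2}\,\mathbb{E}[(R_k(t)-r_k)^2]$. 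Combining this with the first step and $\frac{n}{n-2m}\le\frac{1}{1-2\alpha}$ reduces the lemma to a second-moment bound on the rank estimate of the form $\mathbb{E}[(R_k(t)-r_k)^2]\le\frac{C}{ct}\,\sigma_n(r_k)^2$.

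Establishing this second-moment bound is the main work, and I expect it to be the chief obstacle; the exponent $2$ together with the rate $1/(ct)$ is exactly what upgrades the $\mathcal{O}(1/\sqrt{ct})$ first-moment guarantee of \cref{thm:async-gorank} into the $\mathcal{O}(1/(ct))$ claimed here. I would re-run the argument behind \cref{thm:async-gorank}: writing $R_k(t)-r_k=n\,\tilde S_k(t)/C_k(t)$ and Taylor-expanding $1/C_k(t)^2$ around its mean $(p_kt)^2$, the leading contribution is the variance-type term $\frac{n^2}{(p_kt)^2}\mathbb{E}[\tilde S_k(t)^2]$, which is controlled by the spectral-gap contraction of $\tilde{\mathbf{W}}_1$ already exploited in the proof of \cref{lem:term1}, while the remaining terms---governed by the concentration of the counter $C_k(t)$ around $p_kt$, as in \cref{lem:term2,lem:term3}---are of lower order. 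The delicate points are the random normalizer $C_k(t)$ now appearing squared in the denominator and the step-to-step correlations inside $\tilde S_k(t)$, but these are precisely the difficulties already resolved in \cref{thm:async-gorank}, so the same estimates transfer.

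Finally I would collect constants: the jump height contributes the factor $1/(1-2\alpha)$, the Markov step the factor $1/\gamma_k^2$, and tracking the numerical constant through the second-moment estimate yields the leading factor $3/(ct)$, giving $\mathbb{E}|W_k(t)-w_{n,\alpha}(r_k)|\le\frac{3}{ct}\cdot\frac{\sigma_n(r_k)^2}{\gamma_k^2(1-2\alpha)}$. The inequality $\gamma_k\ge\tfrac12$ is recorded separately, since it both guarantees that the probabilities above are nondegenerate and feeds directly into the downstream proof of \cref{thm:original-gotrim}.
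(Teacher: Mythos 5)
Your proposal is correct and follows essentially the same route as the paper: both reduce $\mathbb{E}\left|W_k(t)-w_{n,\alpha}(r_k)\right|$ to a misclassification probability (using that the weight is a two-valued scaled indicator, so the deviation has constant sign and the expected absolute error coincides with the quantity controlled in Lemma~1 of \cite{van2025robust}), and then bound that probability via the inclusion $\{w_{n,\alpha}(R_k(t))\neq w_{n,\alpha}(r_k)\}\subseteq\{|R_k(t)-r_k|\geq\gamma_k\}$, Chebyshev's inequality, and a second-moment bound on the rank estimate. The one point of caution is your plan to obtain the second-moment bound through the asynchronous machinery with the random counter $C_k(t)$: \cref{lem:conv-weight} is invoked in the synchronous analysis of \cref{thm:original-gotrim}, where the normalization by $t$ is deterministic, so $\mathbb{E}[(R_k(t)-r_k)^2]\leq \tfrac{3}{ct}\,\sigma_n(r_k)^2$ follows directly from the bound on $\mathbb{E}[\tilde S_k(t)^2]$ already derived in \cref{lem:term1} (together with $n\|\tilde{\mathbf h}_k\|=\sigma_n(r_k)$ and $c\leq 1$), whereas a Taylor expansion of $1/C_k(t)^2$ would introduce remainder terms that are not obviously of lower order and would not recover the stated constant $3$.
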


Using the previous lemma, we can establish the convergence of the expected absolute error of the original \textsc{GoTrim} estimates. The proofs of \cref{lem:conv-weight} and \cref{thm:original-gotrim} are deferred to the technical appendix.

Now, we can prove that \textit{adaptive} \textsc{GoTrim} will have converge of the same order, stated in the following Theorem. 

\begin{theorem}
\label{thm:adaptive-gotrim}
Let $\boldsymbol{Z}(t)$ denote the \textit{adaptive} \textsc{GoTrim} estimate. Then, there exists $T^*>0$, such that for any \( t > T^* \), we have
$$\mathbb{E}\left\|\boldsymbol{Z}(t) - \bar{x}_{\alpha} \boldsymbol{1}_n \right\|_2 \leq \mathcal{O}\left( \frac{1}{\tilde c \sqrt{ct}}\right)\cdot \| {\mathbf K}\odot (\mathbf X + |\bar{x}_{\alpha}|\mathbf 1_n)\| $$
where $\mathbf{K}$, $\tilde c$, and $c$ are as defined in \cref{thm:adaptive-gotrim}. 
\end{theorem}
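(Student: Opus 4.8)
The plan is to treat the normalized estimate $\boldsymbol{Z}(t) = \boldsymbol{N}(t)/\boldsymbol{D}(t)$ as a mild perturbation of the unnormalized estimate $\boldsymbol{N}(t)$, and to reduce everything to two applications of \cref{thm:original-gotrim}. The crucial structural fact is that the denominator satisfies $\boldsymbol{D}(t) = \max(\boldsymbol{1}_n, \boldsymbol{M}(t)) \geq \boldsymbol{1}_n$ coordinate-wise, so element-wise division can only \emph{shrink} coordinates and never amplifies the error. Writing, for each $k \in [n]$,
$$Z_k(t) - \bar{x}_{\alpha} = \frac{N_k(t) - \bar{x}_{\alpha} D_k(t)}{D_k(t)}, \qquad D_k(t) \geq 1,$$
I would first use $D_k(t) \geq 1$ to get $|Z_k(t) - \bar{x}_{\alpha}| \leq |N_k(t) - \bar{x}_{\alpha} D_k(t)|$, and then split $\bar{x}_{\alpha} D_k(t) = \bar{x}_{\alpha} + \bar{x}_{\alpha}(D_k(t) - 1)$ to obtain the coordinate-wise bound
$$|Z_k(t) - \bar{x}_{\alpha}| \leq |N_k(t) - \bar{x}_{\alpha}| + |\bar{x}_{\alpha}|\,|D_k(t) - 1|.$$

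Since $D_k(t) - 1 = \max(0, M_k(t) - 1)$, we have $|D_k(t) - 1| \leq |M_k(t) - 1|$, so the $\max$ cutoff is harmless. Summing squares, applying Minkowski's inequality, and taking expectations yields
$$\mathbb{E}\|\boldsymbol{Z}(t) - \bar{x}_{\alpha}\boldsymbol{1}_n\|_2 \leq \mathbb{E}\|\boldsymbol{N}(t) - \bar{x}_{\alpha}\boldsymbol{1}_n\|_2 + |\bar{x}_{\alpha}|\,\mathbb{E}\|\boldsymbol{M}(t) - \boldsymbol{1}_n\|_2.$$
The first term is exactly the quantity bounded in \cref{thm:original-gotrim}, contributing $\mathcal{O}(1/(\tilde c\sqrt{ct}))\,\|\mathbf{K}\odot\mathbf{X}\|$. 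For the second term, the key observation is that $\boldsymbol{M}(t) = \sum_{s=1}^t \boldsymbol{W}_2(t{:}s)\,\Delta\boldsymbol{W}(s)\odot\boldsymbol{1}_n$ is \emph{structurally identical} to the \textsc{GoTrim} numerator $\boldsymbol{N}(t)$, but with the data vector $\boldsymbol{X}$ replaced by $\boldsymbol{1}_n$; that is, $\boldsymbol{M}(t)$ is the \textsc{GoTrim} estimate run on the constant dataset $\boldsymbol{1}_n$. Because each retained observation carries weight $n/(n-2m)$ and there are exactly $n-2m$ of them, $\frac{1}{n}\sum_{k=1}^n w_{n,\alpha}(r_k) = 1$, so the associated trimmed mean equals $1$ and the target for $\boldsymbol{M}(t)$ is precisely $\boldsymbol{1}_n$. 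Since the prefactor $\mathbf{K}$ depends only on the ranks and the spectral gap $c$ through $\sigma_n(r_k)$ and $\gamma_k$---and \emph{not} on the data values---\cref{thm:original-gotrim} (and the underlying \cref{lem:conv-weight}) applies verbatim with $\boldsymbol{X}\mapsto\boldsymbol{1}_n$ and $\bar{x}_{\alpha}\mapsto 1$, giving $\mathbb{E}\|\boldsymbol{M}(t) - \boldsymbol{1}_n\|_2 \leq \mathcal{O}(1/(\tilde c\sqrt{ct}))\,\|\mathbf{K}\|$.

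Combining the two contributions produces $\mathbb{E}\|\boldsymbol{Z}(t) - \bar{x}_{\alpha}\boldsymbol{1}_n\|_2 \leq \mathcal{O}(1/(\tilde c\sqrt{ct}))\,(\|\mathbf{K}\odot\mathbf{X}\| + |\bar{x}_{\alpha}|\,\|\mathbf{K}\|)$, and since coordinate-wise $K_k|X_k| \leq K_k(|X_k| + |\bar{x}_{\alpha}|)$ and $K_k|\bar{x}_{\alpha}| \leq K_k(|X_k| + |\bar{x}_{\alpha}|)$, both norms are dominated by $\|\mathbf{K}\odot(|\mathbf{X}| + |\bar{x}_{\alpha}|\boldsymbol{1}_n)\|$, which is the stated right-hand side (the universal constant being absorbed into the $\mathcal{O}$), with $T^*$ inherited from \cref{thm:original-gotrim}. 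The main obstacle I anticipate is not the ratio algebra---tamed cleanly by $\boldsymbol{D}(t) \geq \boldsymbol{1}_n$---but rigorously justifying the \emph{uniform} transfer of \cref{thm:original-gotrim} to $\boldsymbol{M}(t)$: one must verify that the constants in \cref{lem:conv-weight} and the threshold $T^*$, which are governed by how close the rank estimates sit to the trimming boundaries (through $\gamma_k$ and $\sigma_n(r_k)$), are genuinely data-independent, so that a single $\mathbf{K}$ and a single $T^*$ simultaneously control both the $\boldsymbol{N}(t)$ and $\boldsymbol{M}(t)$ analyses.
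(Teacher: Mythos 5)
Your proposal is correct and follows essentially the same route as the paper: decompose the error of $\boldsymbol{Z}(t)$ coordinate-wise (using $D_k(t)\geq 1$) into the error of $\boldsymbol{N}(t)$ plus $|\bar{x}_{\alpha}|$ times the error of $\boldsymbol{M}(t)$, then apply \cref{thm:original-gotrim} to both terms, noting that $\boldsymbol{M}(t)$ is the same recursion with $\boldsymbol{X}$ replaced by $\boldsymbol{1}_n$ and target $1$. The only difference is cosmetic but slightly cleaner on your side: you stay in the $\ell_2$ norm throughout via Minkowski, whereas the paper detours through $\|\cdot\|_2\leq\|\cdot\|_1\leq\sqrt{n}\|\cdot\|_2$ and absorbs the resulting $\sqrt{n}$ factor into the $\mathcal{O}$.
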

The proof is deferred to the technical appendix.
\section{Numerical Experiments}

\begin{figure*}[htbp]
    \centering
    % First subfigure
    \begin{subfigure}[b]{0.32\textwidth}
        \centering
        \includegraphics[height=5cm, width=\textwidth]{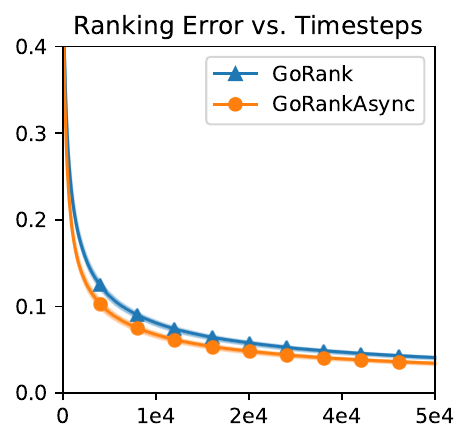}
        \caption{GoRank vs. GoRankAsync}
        \label{fig:sub1}
    \end{subfigure}
    \hfill
    % Second subfigure
    \begin{subfigure}[b]{0.33\textwidth}
        \centering
        \includegraphics[height=5cm, width=1\textwidth]{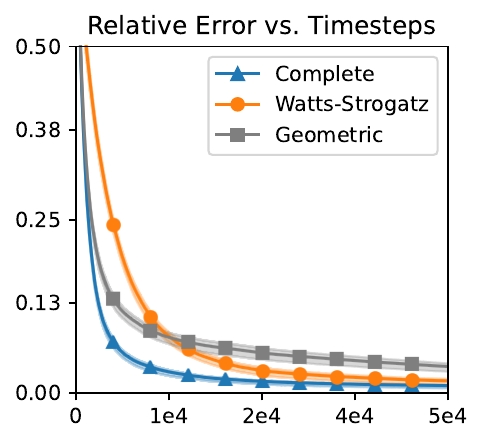}
        \caption{Wilcoxon Statistic Estimation}
        \label{fig:sub2}
    \end{subfigure}
    \hfill
    % Third subfigure
    \begin{subfigure}[b]{0.33\textwidth}
        \centering
        \includegraphics[height=5cm, width=1\textwidth]{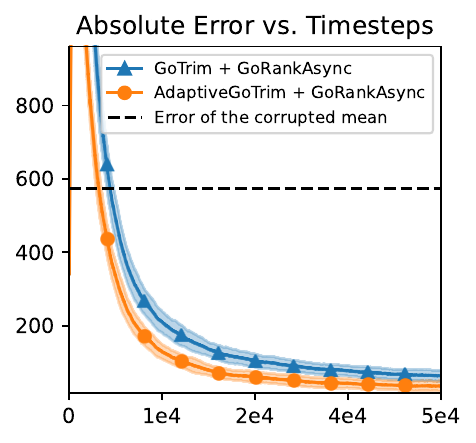}
        \caption{GoTrim with Bias Correction}
        \label{fig:sub3}
    \end{subfigure}
    \caption{Figure (a) shows that \textit{Asynchronous} \textsc{GoRank} converges slightly faster than its synchronous version. Figure (b) shows fast convergence for the Wilcoxon estimation even on sparse graphs, with convergence speed matching the theoretical bound. Figure (c) shows that \textit{adaptive} \textsc{GoTrim} slightly outperforms its original version (both improving over the naive mean).}
\end{figure*}

\paragraph{Setup} For \cref{alg:async-gorank} and \cref{alg:gotrim-bias}, we conduct experiments on a dataset \( S = \{1, \dots, n\} \) with \( n = 500 \), distributed across nodes of a communication graph. We evaluate \cref{alg:async-gorank} using the normalized absolute error between estimated and true ranks; that is, for node \( k \) at iteration \( t \), the error is defined as \( \ell_k(t) = |R_k(t) - r_k| / n \). Figure~(a) compares the performance of synchronous (see \cite{van2025robust}) and asynchronous GoRank on a Watts--Strogatz graph (\( c = 3.11 \times 10^{-4} \)).

\cref{alg:general-gotrim} is evaluated on the estimation of the Wilcoxon statistic. For this, we consider a dataset \( S_1 \) of \( n_1 = 250 \) samples from a Cauchy distribution with location $0.8$ and scale $1.0$, and a second dataset \( S_2 \) of \( n_2 = 250 \) samples from a Cauchy distribution with location $0.0$ and scale $1.0$. We compute the average relative error defined by $\ell(t) = \frac{1}{n} \sum_k \frac{|Z_k(t) - t_n|}{t_n}$.
Figure~(c) compares the convergence of \cref{alg:gotrim-bias} on three graph topologies: the complete graph (\( c = 4.01 \times 10^{-3} \)), in which every node is directly connected to all others, yielding maximal connectivity; a random connected geometric graph (\( c = 3.57 \times 10^{-5} \)) with radius \( \rho = 0.1 \); and the Watts--Strogatz network (\( c = 3.11 \times 10^{-4} \)), a randomized graph with average degree \( k = 4 \) and rewiring probability \( p = 0.2 \), offering intermediate connectivity between the complete and geometric graphs.

\cref{alg:gotrim-bias} is evaluated on a corrupted dataset. Here, the dataset \( S \) is contaminated by replacing a fraction \( \lfloor \varepsilon n \rfloor \) with \( \varepsilon = 0.3 \) of the values via scaling: a value \( x \) is changed to \( sx \) with \( s = 10 \). While this is a relatively simple form of corruption, it is sufficient to \textit{break down} the classical mean. We measure performance using the absolute error between the estimated and true trimmed mean. For node \( k \) at iteration \( t \), the error is given by $\ell(t) = \frac{1}{n} \sum_k \left| Z_k(t) - \bar{x}_{\alpha} \right|,$ where $\bar{x}_{\alpha}$ represents the true trimmed mean with \( \alpha = 0.4 \). Figure~(b) compares the performance of GoTrim with and without the bias correction.

All algorithms are run for \(5 \times 10^{4} \) iterations, averaged over $100$ trials. For each trial, the data is randomly distributed across the graph. The plots display the mean error and standard deviation over the iterations. The code for our experiments will be publicly available.

\paragraph{Results} Figure~(a) shows that the asynchronous variant actually converges faster than synchronous GoRank. While this is not captured in our convergence bounds from \cref{thm:async-gorank}, these bounds still guarantee that both synchronous and asynchronous versions share the same order of convergence rates, which is consistent with our empirical observations.  Figure~(b) demonstrates that our algorithm can efficiently estimate the Wilcoxon statistic even in sparsely connected graphs. Furthermore, the asymptotic behavior aligns with the convergence bounds in \cref{thm:wilcoxon}: the complete graph achieves the fastest convergence, while the geometric graph is the slowest. Figure~(c) shows that GoTrim with bias correction slightly outperforms the vanilla GoTrim, as expected. Both variants quickly improve over the naive (corrupted) mean, but the bias-corrected version yields better overall accuracy.

\section{Conclusion}

We presented new gossip algorithms for estimating rank-based statistics, including trimmed means and the Wilcoxon statistic, enabling robust decentralized estimation and hypothesis testing. Our methods operate in the asynchronous setting and are supported by experiments across diverse data distributions and network topologies. We established novel theoretical guarantees, with our asynchronous result for rank estimation being, to the best of our knowledge, the first of its kind. Note that our theoretical results extend to arbitrary edge sampling probabilities, provided that the graph remains connected. Future work will explore adaptive sampling strategies to further enhance convergence speed and robustness.

\section{Technical Appendix}

In this section, we first present key properties of the transition matrices, essential for our theoretical results. We then provide the proofs of lemmas that were previously omitted.

\subsection{Preliminary Results}

\begin{lemma}
\label{lem:properties}
Assume the graph $\mathcal{G }=(V, E)$ is connected and non-bipartite. Let \( t > 0 \).  If at iteration \( t \), edge \(e= (i, j) \) is selected with probability \( p_e > 0\), then the transition matrix is given by $ \mathbf{W}_{\alpha}(t) = \mathbf{I}_n -\mathbf{L}_e/\alpha $ where the elementary Laplacian is defined as $\mathbf{L}_e  = \left( \mathbf{e}_i - \mathbf{e}_j \right) \left( \mathbf{e}_i - \mathbf{e}_j \right)^{\top}$ and where $\alpha=1$ corresponds to the swapping matrix and $\alpha=2$ to the averaging matrix.
The following properties hold:
\begin{enumerate}[label=(\alph*)]
    \item The matrices are symmetric and doubly stochastic, meaning that $\mathbf{W}_\alpha(t) \mathbf{1}_n = \mathbf{1}_n$ and  $\mathbf{1}_n^\top \mathbf{W}_\alpha(t) = \mathbf{1}_n^\top$.
They satisfy:  $\mathbf{W}_2(t)^2 = \mathbf{W}_2(t)$ and $\quad \mathbf{W}_1(t)^2 = \mathbf{I}_n$.
    \item For \(\alpha \in \{1, 2\}\), we have, since  $\sum_{e \in E} p_e = 1$,  
\begin{equation}
\mathbf{W}_\alpha = \mathbb{E}[\mathbf{W}_\alpha(t)] = \mathbf{I}_n - \frac{1}{\alpha}\sum_{e \in E} p_e\mathbf{L}_e.
\end{equation}
The matrix \(\mathbf{W}_\alpha\) is also doubly stochastic and it follows that \(\mathbf{1}_n\) is an eigenvector with eigenvalue 1. 

\item The matrix $\tilde{\mathbf{W}}_\alpha \triangleq \mathbf{W}_\alpha -{\mathbf{1}_n \mathbf{1}_n^\top}/n$ satisfies, by construction, \(\tilde{\mathbf{W}} \mathbf{1}_n = 0\), and it can be shown that $\| \tilde{\mathbf{W}}_\alpha \|_{\operatorname{op}} \leq \lambda_2(\alpha)$, where \(\lambda_2(\alpha)\) is the second largest eigenvalue of \(\mathbf{W}_\alpha\) and $\|\cdot\|_{\operatorname{op}}$ denotes the operator norm of a matrix. Moreover, the eigenvalue \(\lambda_2(\alpha)\) satisfies $0 \leq \lambda_2(\alpha) < 1$ and $\lambda_2(\alpha) = 1 - {c/\alpha}$ where $c$ the spectral gap (or second smallest eigenvalue) of the Laplacian of the weighted graph defined as $\mathbf L(P) =\sum_{e \in E} p_e\mathbf{L}_e$ . 

\end{enumerate}    
\end{lemma}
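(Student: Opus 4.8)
~The plan is to obtain (a) and (b) by direct algebra from the rank-one structure of $\mathbf{L}_e = (\mathbf{e}_i-\mathbf{e}_j)(\mathbf{e}_i-\mathbf{e}_j)^\top$, and to reduce (c) to a spectral analysis of the weighted Laplacian $\mathbf{L}(P) = \sum_{e\in E} p_e\mathbf{L}_e$. For (a), symmetry of $\mathbf{W}_\alpha(t)=\mathbf{I}_n-\mathbf{L}_e/\alpha$ is inherited from the outer product; the identity $(\mathbf{e}_i-\mathbf{e}_j)^\top\mathbf{1}_n=0$ yields $\mathbf{L}_e\mathbf{1}_n=\mathbf{0}$, hence $\mathbf{W}_\alpha(t)\mathbf{1}_n=\mathbf{1}_n$ and, by symmetry, $\mathbf{1}_n^\top\mathbf{W}_\alpha(t)=\mathbf{1}_n^\top$, while nonnegativity of entries is checked in the two cases ($\mathbf{W}_1(t)$ is the $i\leftrightarrow j$ transposition matrix and $\mathbf{W}_2(t)$ the $1/2$ averaging block). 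The two product identities follow from $\mathbf{L}_e^2=\|\mathbf{e}_i-\mathbf{e}_j\|^2\mathbf{L}_e=2\mathbf{L}_e$, since $\mathbf{W}_2(t)^2=\mathbf{I}_n-\mathbf{L}_e+\tfrac14\mathbf{L}_e^2=\mathbf{W}_2(t)$ and $\mathbf{W}_1(t)^2=\mathbf{I}_n-2\mathbf{L}_e+\mathbf{L}_e^2=\mathbf{I}_n$. Part (b) is then just linearity of expectation together with $\sum_e p_e=1$, preservation of symmetry and double stochasticity under convex combinations, and $\mathbf{W}_\alpha\mathbf{1}_n=\mathbb{E}[\mathbf{W}_\alpha(t)\mathbf{1}_n]=\mathbf{1}_n$.

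The substance is in (c). First I would record that $\mathbf{L}(P)$ is symmetric positive semidefinite (a nonnegative combination of the PSD rank-one matrices $\mathbf{L}_e$), with ordered eigenvalues $0=\mu_1\le\mu_2\le\cdots\le\mu_n$ and kernel spanned by $\mathbf{1}_n$; connectedness of $\mathcal{G}$ makes $\mu_1=0$ simple and forces the algebraic connectivity $\mu_2=c>0$ (Fiedler). Since $\mathbf{W}_\alpha=\mathbf{I}_n-\mathbf{L}(P)/\alpha$ shares the eigenvectors of $\mathbf{L}(P)$, its eigenvalues are $1-\mu_i/\alpha$, with largest $\lambda_1(\alpha)=1$ on $\mathbf{1}_n$ and second largest $\lambda_2(\alpha)=1-c/\alpha$. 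Because $\mathbf{1}_n\mathbf{1}_n^\top/n$ is precisely the orthogonal projector onto that top eigenline, $\tilde{\mathbf{W}}_\alpha=\mathbf{W}_\alpha-\mathbf{1}_n\mathbf{1}_n^\top/n$ retains all other eigenpairs and replaces the top one by $0$, so by the spectral theorem $\|\tilde{\mathbf{W}}_\alpha\|_{\operatorname{op}}=\max(\lambda_2(\alpha),-\lambda_n(\alpha))$.

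The delicate step is showing $\lambda_2(\alpha)\in[0,1)$ and that this maximum is attained at $\lambda_2(\alpha)$ rather than at $-\lambda_n(\alpha)$. The bound $\lambda_2(\alpha)<1$ is exactly connectivity $c>0$, and $\lambda_2(\alpha)\ge0$ follows from the trace identity $\operatorname{tr}\mathbf{L}(P)=\sum_e p_e\operatorname{tr}\mathbf{L}_e=2$, which gives $c=\mu_2\le\tfrac{2}{n-1}\le\alpha$ for $n\ge3$ and $\alpha\ge1$. The same identity controls the bottom of the spectrum: $\mu_n+\mu_2\le\sum_{i\ge2}\mu_i=2\le2\alpha$ rearranges to $-\lambda_n(\alpha)=\mu_n/\alpha-1\le1-c/\alpha=\lambda_2(\alpha)$, so indeed $\|\tilde{\mathbf{W}}_\alpha\|_{\operatorname{op}}=\lambda_2(\alpha)=1-c/\alpha$. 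The conceptual role of non-bipartiteness is to keep $\lambda_n(\alpha)$ strictly above $-1$ (equivalently $\mu_n<2$), ruling out the oscillatory mode that would otherwise spoil the contraction, and I would verify the hypothesis delivers $\mu_n<2$, hence $\lambda_n(\alpha)>1-2/\alpha\ge-1$. I expect the main obstacle to be not any single computation but assembling this two-sided spectral control — connectivity at the top, non-bipartiteness and the trace bound at the bottom — so that $\|\tilde{\mathbf{W}}_\alpha\|_{\operatorname{op}}$ is governed by the second-largest eigenvalue rather than by the most negative one.
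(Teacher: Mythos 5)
Your proposal is correct, and it is in fact more substantive than the paper's own proof, which consists entirely of a pointer to the synchronous-case arguments in \cite{van2025robust} and the references \cite{boyd2006randomized, colin2015extending}, asserting that they carry over when the uniform edge probabilities $1/|E|$ are replaced by arbitrary $p_e>0$. Your parts (a) and (b) are the same routine algebra those references use (the identity $\mathbf{L}_e^2 = 2\mathbf{L}_e$ doing all the work). Where you genuinely add something is in part (c): rather than invoking the cited spectral analysis, you derive the two-sided control of the spectrum of $\mathbf{W}_\alpha$ from the single trace identity $\operatorname{tr}\mathbf{L}(P) = \sum_e p_e \operatorname{tr}\mathbf{L}_e = 2$, which simultaneously gives $\mu_2 \le 2/(n-1) \le \alpha$ (hence $\lambda_2(\alpha)\ge 0$) and $\mu_n \le 2-\mu_2$ (hence $-\lambda_n(\alpha) \le \lambda_2(\alpha)$, so the operator norm of $\tilde{\mathbf{W}}_\alpha$ is governed by the top rather than the bottom of the spectrum). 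I verified the arithmetic: $(2-c)/\alpha - 1 \le 1 - c/\alpha$ reduces to $\alpha \ge 1$, and a connected non-bipartite graph forces $n\ge 3$ so that $\mu_2$ and $\mu_n$ are distinct summands. One observation worth making explicit: under the normalization $\sum_e p_e = 1$, your trace bound already yields $\mu_n \le 2-\mu_2 < 2$ from connectivity alone, so the non-bipartiteness hypothesis is not actually load-bearing for this lemma (for $n\ge 3$) --- your closing paragraph slightly overstates its role. This self-contained argument buys independence from the external references and makes transparent exactly which structural facts (normalized edge probabilities, connectivity) the contraction $\|\tilde{\mathbf{W}}_\alpha\|_{\operatorname{op}} = 1 - c/\alpha < 1$ rests on.
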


\begin{proof}
These results are analogous to those in \cite{van2025robust}, with the difference that the uniform edge sampling \( p_e = 1/|E| \) is replaced by an arbitrary distribution. The proofs remain valid under the assumption that \( p_e > 0 \). For more details, we refer the reader to \cite{boyd2006randomized, colin2015extending}.
\end{proof}

\subsection{Proof of \cref{lem:term1}}

\begin{proof}
Applying Cauchy--Schwarz inequality, we obtain
$\mathbb{E}[|\tilde{S}_k(t)|] \leq \sqrt{\mathbb{E}[\tilde{S}_k(t)^2]}.$ Define the centered vector $\tilde{\mathbf{h}}_k := \mathbf{h}_k - \bar{h}_k \mathbf{1}_n$. Using the fact that $\tilde{\mathbf{W}}_1(s{-}1{:}) \mathbf{1}_n = 0$, 
\[
\tilde{S}_k(t) = \sum_{s=1}^t \delta_k(s)\, \tilde{\mathbf{h}}_k^\top \tilde{\mathbf{W}}_1(s{-}1{:})^\top \mathbf{e}_k.
\]The corresponding squared term can then be written as:
\begin{align*}
  &\tilde{S}_k(t)^2 = \sum_{s = 1}^{t} \delta_k(s) \mathbf{\tilde h}_k^\top \tilde{\mathbf{W}}_1(s{-}1{:})^\top  \mathbf{e}_k \mathbf{e}_k^\top \tilde{\mathbf{W}}_1(s{-}1{:}) \mathbf{\tilde h}_k  \\
  &+ 2\sum_{s <u }\delta_k(s) \delta_k(u) \mathbf{\tilde h}_k^\top \tilde{\mathbf{W}}_1(s{-}1{:})^\top  \mathbf{e}_k \mathbf{e}_k^\top \tilde{\mathbf{W}}_1(u{-}1{:}) \mathbf{\tilde h}_k.
\end{align*}
We define \(v(s) := \delta_k(s)\tilde{\mathbf{h}}_k^\top \tilde{\mathbf{W}}_1(s{-}1{:})^\top \mathbf{e}_k  \) and consider the term  \(  \sum_{s=1}^{t} v(s)^2 \). Let $\mathbf{J} = (1/n) \mathbf{1}_n \mathbf{1}_n^\top$. Since we have $\tilde{\mathbf{h}}_k^\top \mathbf{J} = 0$, $\|\mathbf{e}_k \mathbf{e}_k^\top \|_{\operatorname{op}} \leq 1$, and the permutation matrix is its own inverse, we can bound $v(s)^2$ as follows:
\[
\delta_k(s)\tilde{\mathbf{h}}_k^\top \mathbf{W}_1(s{-}1{:})^\top \mathbf{e}_k \mathbf{e}_k^\top \mathbf{W}_1(s{-}1{:}) \tilde{\mathbf{h}}_k \leq \delta_k(s)\tilde{\mathbf{h}}_k^\top  \tilde{\mathbf{h}}_k.
\]Taking expectation and using that $\delta_k(s)$ is a Bernoulli variable with mean $p_k$, we obtain
\[
\mathbb{E} \left[ \sum_{s=1}^{t} v(s)^2 \right] \leq p_k t\, \|\tilde{\mathbf{h}}_k\|^2.
\]
Next, we consider the second term: \( \sum_{s < u} v(s) v(u) \). Observe that, for \( s < u \), the product \( v(s) v(u) \) can be expressed as
\[
\delta_k(s) \delta_k(u)\tilde{\mathbf{h}}_k^\top {\mathbf{W}}_1(s{-}1{:})^\top \mathbf{e}_k \mathbf{e}_k^\top {\mathbf{W}}_1(u{-}1{:}s) 
{\mathbf{W}}_1(s{-}1{:}) \tilde{\mathbf{h}}_k .
\]
Define the vector \( \mathbf{y}(s) := \mathbf{W}_1(s{-}1{:}) \tilde{\mathbf{h}}_k \). Then, the conditional expectation of the cross term given \( \mathbf{W}_1(s{-}1{:}) \) is
\[
  p_k \, \mathbf y(s{-}1)^\top
\mathbf{e}_k \mathbf{e}_k^\top \mathbb{E}\left[ \delta_k(s){\mathbf{W}}_1(u{-}1{:}s)\right] \,  \mathbf y(s{-}1)\enspace.
\]
It remains to bound the expectation
$$ \mathbb{E}\left[ \delta_k(s)\tilde{\mathbf{W}}_1(u{-}1{:}s)\right] = \tilde{\mathbf{W}}_1^{u-s-1} \mathbb{E}\left[ \delta_k(s)\tilde{\mathbf{W}}_1(s)\right].$$
Let $L_{ij} = (\mathbf{e}_i - \mathbf{e}_j)(\mathbf{e}_i - \mathbf{e}_j)^\top$ denote the elementary Laplacian matrix. Observe that, the term $\mathbb{E}[\delta_k(s){\mathbf{W}}_1(s)]$ becomes
\begin{equation*}
\sum_{(i,j) \in E} p_{ij} \mathbb{I}_{\{k \in \{i,j\}\}} (\mathbf I_n - \mathbf L_{ij}) = p_k \left(\mathbf I_n - \frac{1}{p_k}\, \mathbf L_k\right),    
\end{equation*}
where $\mathbf L_k$ is the Laplacian of the weighted star graph formed by node $k$ and its neighbors $j$ with weight $p_{kj}$. Note that this graph is bipartite. Define $\mathbf P_k = \mathbf I_n - (1/p_k)\, \mathbf L_k$. Since the matrix $\mathbf P_k$ is also doubly stochastic, defining the projection $\tilde{\mathbf P}_k = \mathbf P_k - (1/n)\, \mathbf{1}_n \mathbf{1}_n^\top$, we have $\| \tilde {\mathbf P}_k \|_{\operatorname{op}}\leq1$. We can now bound the conditional expectation of the cross term:
$$\mathbb{E}[v(s) v(u) \mid \mathbf{W}_1(s{-}1{:})] \leq p_k^2 \, \lambda_2(1)^{u - s-1} \, \mathbf y(s{-}1)^\top \mathbf y(s{-}1),$$
where we used the assumptions $\|\tilde{\mathbf{W}}_1\|_{\operatorname{op}} \leq \lambda_2(1)$. Similar to the previous bound, we use the properties of the permutation matrices to obtain the following bound:
\[
\mathbb{E}[v(s) v(u)] \leq p_k^2 \,\lambda_2(1)^{u - s-1} \, \|\tilde{\mathbf{h}}_k\|^2 \enspace.
\]
To bound the sum over all such cross terms, observe that
\[
\sum_{s < u} \lambda_2(1)^{u - s-1} \leq \sum_{u=1}^{t} \sum_{d=0}^{u-2} \lambda_2(1)^d \leq \frac{t}{1 - \lambda_2(1)} \enspace.
\]
Setting $c = 1 - \lambda_2(1)$ and combining both terms, we have:
\[
\mathbb{E}[\tilde{S}_k(t)^2] \leq \left( p_k t + \frac{2p_k ^2t}{c}  \right) \|\tilde{\mathbf{h}}_k\|^2.
\]
Using \( c/p_k \leq n\), we obtain
\[
\frac{1}{(p_k t)^2} \mathbb{E}[\tilde{S}_k(t)^2] \leq \frac{2n}{c t} \cdot \|\tilde{\mathbf{h}}_k\|^2.
\]
Taking the square root finishes the proof.
\end{proof}

\subsection{Proof of \cref{lem:term2}}

\begin{proof}
By the Cauchy--Schwarz inequality, we have
$$\mathbb{E}[ |(C_k(t){-} p_k t)\tilde{S}_k(t)|] \leq \sqrt{\mathbb{E}\left[(C_k(t) {-} p_k t)^2\right]} \cdot  \sqrt{\mathbb{E}[\tilde{S}_k(t)^2]}.$$
The first term corresponds to the variance of the binomial random variable \( C_k(t) \sim \mathcal{B}(t, p_k) \), which satisfies 
\[
\mu_2 = \mathbb{E}\left[ (C_k(t) - p_k t)^2 \right] = t p_k (1 - p_k) \leq t p_k.
\]
The second term was previously bounded in the proof of Lemma 1: 
\[
\frac{1}{(p_k t)^2} \mathbb{E}[\tilde{S}_k(t)^2] \leq \frac{2n}{c t} \cdot \|\tilde{\mathbf{h}}_k\|^2.
\]
Denoting $m=|E|$ and using that $1/p_k \leq n$,  we obtain $\sqrt{\mu_2 /(p_kt)^2} \leq \sqrt{n/ct}$. Combining both bounds finishes the proof.  
\end{proof}

\subsection{Proof of \cref{lem:term3}}

\begin{proof}
By the Cauchy--Schwarz inequality and the fact that \( 1/C_k(t)^2 \leq 1 \), we obtain
$$\mathbb{E}\left[{\frac{(C_k(t){-} a)^2}{C_k(t)} |\tilde{S}_k(t)|}\right] \leq \sqrt{\mathbb{E}\left[(C_k(t) {-} a)^4 \right]} \cdot  \sqrt{\mathbb{E}[\tilde{S}_k(t)^2]}.$$
where we denote \( a := p_k t \) for convenience.
The first factor corresponds to the fourth central moment \( \mu_4 \) of the binomial distribution \( \mathcal{B}(t, p_k) \), which satisfies
\[
\mu_4 = t p_k (1 - p_k) \left[ 1 - 6 p_k (1 - p_k) + 3 t p_k (1 - p_k) \right].
\]
Using the bound \( \mu_4 \leq t p_k + 3 t^2 p_k^2 \), we deduce (in analogy with Lemma 2) that
\[
\frac{1}{(p_kt)^2}\mathbb{E}\left[(C_k(t) - p_k t)^4 \right] \leq  3 + \frac{1}{tp_k} \leq 3+\frac{n}{t}.
\]
Combining this with the bound from Lemma 1 on \( \mathbb{E}[\tilde{S}_k(t)^2] \), we obtain
\[
\frac{1}{(p_k t)^4} \mathbb{E}\left[(C_k(t) - p_k t)^4 \right]  \,\mathbb{E}[\tilde{S}_k(t)^2] \leq \left(3+\frac{n}{t}\right)\cdot\frac{2n}{c t} \cdot \|\tilde{\mathbf{h}}_k\|^2.
\]
Finally, choosing $\kappa > 0$ such that $\sqrt{6n + 2n^2/t} \leq \kappa$ completes the proof.
\end{proof}

\subsection{Proof of \cref{thm:wilcoxon} }

\begin{proof}
The result follows by adapting the proof of Theorem 3 in \cite{van2025robust}. Specifically, we replace $\mathbf{X}$ with $\mathbf{b} := (b_1, \ldots, b_n)$, where $\mathbf{b}(t)$ denotes the vector weights for the Wilcoxon statistic, and instead of the convergence bound for the weights, we use the analogous bound for the rank estimates: $\left| \mathbb{E}[R_k(t)] - r_k \right| \leq \mathcal{O}\left(1/{ct}\right).$ Indeed, denoting $\mathbf{Q}(t)$ the residuals, one has
\[
\left\|\mathbb{E}[\boldsymbol{Z}(t)] - t_n \mathbf{1}_n \right\| 
\leq \left| \sum_{k=1}^n (\mathbb{E}[R_k(t)] - r_k) b_k\right| \cdot \|\mathbf{1}_n\| + \|\mathbf{Q}(t)\|.
\]
The remainder of the argument proceeds identically to the proof of \cite[Theorem~3]{van2025robust}.
\end{proof}

\subsection{Proof of \cref{lem:conv-weight}}

\begin{proof}
We note that the argument used in the proof of Lemma~1 in \cite{van2025robust} can be directly applied to bound $\mathbb{E}| W_k(t) - w_{n,\alpha}(r_k)|$. Specifically, consider two cases: if \( r_k \in I_{n,\alpha} \), the expression simplifies to $\mathbb{E}\left|\mathbb{I}_{\{R_k(t) \notin I_{n,\alpha}\}}\right| = \mathbb{P}(R_k(t) \notin I_{n,\alpha})$; if \( r_k \notin I_{n,\alpha} \), the expression becomes $\mathbb{E}\left|\mathbb{I}_{\{R_k(t) \in I_{n,\alpha}\}}\right| = \mathbb{P}(R_k(t) \in I_{n,\alpha}).$ This aligns exactly with the setup of Lemma~1 in \cite{van2025robust}. Applying the same reasoning and steps from that proof yields the desired bound.   
\end{proof}

\subsection{Proof of \cref{thm:original-gotrim}}

\begin{proof}
Let $\mathbf{J} = (1/n) \mathbf{1}_n \mathbf{1}_n^\top$. Define $\bar{x}_{\alpha}(t) =(1/n) \cdot\left( \mathbf{W}(t) \odot \mathbf{X} \right)^\top \mathbf{1}_n.$ Note that $\sum_{s=1}^{t} \mathbf{J} \left( \Delta \mathbf{W}(s) \odot \mathbf{X} \right) = \bar{x}_{\alpha}(t) \mathbf{1}_n.$ Hence, denoting $\mathbf{S}(s) = \Delta \mathbf{W}(s) \odot \mathbf{X}$, we can write
\begin{align*}
\mathbb{E}\left\|\mathbf{N}(t) - \bar{x}_{\alpha} \mathbf{1}_n \right\|_2 
&\leq \mathbb{E}\|\sum_{s=1}^{t} \tilde{\mathbf{W}}_2(t{:}s) \, \mathbf{S}(s) \|_2 \\
&+ \mathbb{E}\left\| \left( \bar{x}_{\alpha}(t) - \bar{x}_{\alpha} \right) \mathbf{1}_n \right\|_2,    
\end{align*}
where $\tilde{\mathbf{W}_2}(t{:}s) = \mathbf{W}_2(t{:}s) - \mathbf{J}$. Denote $v(s) = \tilde{\mathbf{W}}_2(t{:}s) \, \mathbf{S}(s).$ Analyzing the first term amounts to analyzing $\mathbb{E}[\|v(s)\|_2^2]$, as
{
\begin{align*}
     \mathbb{E}\|\sum_{s=1}^{t} \tilde{\mathbf{W}}_2(t{:}s) \, \mathbf{S}(s) \|_2 
     &\leq \sum_{s=1}^{t} \mathbb{E}\| \tilde{\mathbf{W}}_2(t{:}s) \, \mathbf{S}(s) \|_2 \\
     &\leq \sum_{s=1}^{t} \sqrt{\mathbb{E}\| \tilde{\mathbf{W}}_2(t{:}s) \, \mathbf{S}(s) \|^2_2}.
\end{align*}
}
We now analyze \( \mathbb{E}[\|v(s)\|_2^2] = \mathbb{E}[v(s)^\top v(s)] \). Taking expectation conditional on \( \mathcal{F}_{t-1} \) (natural filtration), for \( s < t, \)
\begin{align*}
\mathbb{E}[v(s)^\top v(s) \mid \mathcal{F}_{t-1}] &= v(s-1)^\top \, \mathbb{E}[\tilde{\mathbf{W}}_2(t)^\top \tilde{\mathbf{W}}_2(t)] \, v(s-1) \\
&\leq \lambda_2(2) \, v(s-1)^\top v(s-1),    
\end{align*}
and by repeatedly conditioning, we have
\[
\mathbb{E}[v(s)^\top v(s)] \leq \lambda_2(2)^{t - s} \, \mathbb{E}[\|\mathbf{S}(s)\|_2^2].
\]
Since $\mathbb{E}[\|\mathbf{S}(s)\|_2^2] = \mathbb{E}\left[\sum_{k=1}^n [W_k(s) - W_k(s-1)]^2 X_k^2\right],$ we only need to bound \(\mathbb{E}[|W_k(s) - W_k(s-1)|^2]\). Recall that \(W_k(s) = {\mathbb{I}_{\{R_k(s) \in I_{n, \alpha}\}}}/{c_{n, \alpha}}\), so
\[
\mathbb{E}[|W_k(s) - W_k(s-1)|^2] = {\mathbb{E}[|W_k(s) - W_k(s-1)|]}/c_{n, \alpha},
\]
where we used the fact that indicator functions take values in \(\{0, 1\}\). Using the result from the previous lemma, we obtain
\[
\mathbb{E}[|W_k(s) - W_k(s-1)|] \leq  \frac{2C_k}{cs},
\]
where \(C_k = \frac{3\sigma_n^2(r_k)}{\gamma_k^2(1 - 2\alpha)}\) . Denoting $\tilde C_k =  \frac{\sqrt{6}\sigma_n(r_k)}{\sqrt{c}\gamma_k(1 - 2\alpha)}$,  we obtain,
\[
\mathbb{E}[\|\mathbf{S}(s)\|^2] \leq \frac{1}{cs}\sum_{k=1}^n \frac{2C_k}{c_{n, \alpha}} X_k^2 = \frac{1}{s}\cdot  \|\tilde {\mathbf C}\odot \mathbf X\|^2.
\]
Denoting $\tilde c = 1-\tilde \lambda$ with $\tilde \lambda = \sqrt{\lambda_2(2)}$, we obtain
{ 
\begin{align*}
   \sum_{s=1}^{t} \sqrt{\mathbb{E}\| \tilde{\mathbf{W}}_2(t{:}s) \, \mathbf{S}(s) \|^2_2} 
     &\leq \sum_{s=1}^{t}\frac{\sqrt{\lambda_2(2)}^{t - s}}{\sqrt{s}}\cdot  \|\tilde {\mathbf C}\odot \mathbf X\| \\
     &\leq \|\tilde {\mathbf C}\odot \mathbf X\| \cdot \sum_{s=1}^{t}\frac{e^{-\tilde c(t - s)}}{\sqrt{s}} .
\end{align*}
}
To bound the sum, similarly to the proof of Theorem 3 in \cite{van2025robust}, we decompose the sum using an intermediate time step $T = t - \log(\sqrt{t})/\tilde c$. Define \(T^* = \min \left\{ t > 1 \,\middle|\, \tilde c t > \log\left(t\right)/2 \right\}\). Then, for all \(t > T^*\), we have
{\setlength{\jot}{2pt}
\begin{align*}
    \sum_{s=1}^{t}\frac{e^{-\tilde c(t - s)}}{\sqrt{s}} &\leq e^{-\tilde c(t - T)}\sum_{s=1}^{T}{e^{-\tilde c(T - s)}} + \sum_{s=T+1}^{t}\frac{e^{-\tilde c(t - s)}}{\sqrt{T}} \\
     &\leq \frac{1}{\sqrt{\tilde c}}\left( \frac{1}{\sqrt{\tilde ct}} + \frac{1}{\sqrt{\tilde ct - \log(t)/2}}\right).
\end{align*}
}
All in all, using the expression of $\mathbf K$ concludes the proof.
\end{proof}

\subsection{Proof of \cref{thm:adaptive-gotrim}}

\begin{proof}
First, we use the relationship between the $\ell_1$ and $\ell_2$ norms: $\|\cdot\|_2 \leq\|\cdot\|_1.$  Then, we bound as follows: 
\begin{align*}
&\left\|\boldsymbol{Z}(t) - \bar{x}_{\alpha} \boldsymbol{1}_n \right\|_1 
=  \sum_{k=1}^n \left|\frac{N_k(t)}{D_k(t)} - \bar{x}_{\alpha}\right| \\
&= \sum_{k=1}^n \left| \frac{N_k(t) - \bar{x}_{\alpha}}{\max(1, M_k(t))} + \frac{(1 - M_k(t)) \, \mathbb{I}_{\{M_k(t) \geq 1\}} \, \bar{x}_{\alpha}}{\max(1, M_k(t))} \right| \\
&\leq \left\| \boldsymbol{N}(t) - \bar{x}_{\alpha} \boldsymbol{1}_n \right\|_1 + |\bar{x}_{\alpha}| \cdot \left\| \boldsymbol{M}(t) - \boldsymbol{1}_n \right\|_1.
\end{align*}
The convergence of $\mathbf  M(t)$ follows from similar arguments to the convergence of $\mathbf N(t)$, see proof of \cref{thm:original-gotrim}. Using that $\|\cdot\|_1 \leq \sqrt{n} \|\cdot\|_2$, we obtain the desired bound.
\end{proof}

\bibliography{main}

% \onecolumn
% \input{files/6_quantile}
% \input{files/baseline_plus}
% \input{files/test}
% \input{files/1_gorank_ties}

% \subsection{Equations}
% Number equations consecutively. To make your 
% equations more compact, you may use the solidus (~/~), 
% Use ``\eqref{eq}'', not ``Eq.~\eqref{eq}'' or ``equation \eqref{eq}'', except at 
% the beginning of a sentence: ``Equation \eqref{eq} is . . .''
% Please use ``soft'' (e.g., \verb|\eqref{Eq}|) cross references
% Use \verb|{align}| or \verb|{IEEEeqnarray}| instead. 
% {\BibTeX} does not work by magic. It doesn't get the bibliographic
% data from thin air but from .bib files. If you use {\BibTeX} to produce a
% bibliography you must send the .bib files. 

% \section*{Acknowledgment}

% ??

\end{document}